\numberwithin{equation}{section}
\newtheorem{theorem}{Theorem}
\newtheorem{lemma}{Lemma}
\newtheorem{corollary}{Corollary}
\newcommand{\vertiii}[1]{{\left\vert\kern-0.25ex\left\vert\kern-0.25ex\left\vert\, #1 
    \,\,\right\vert\kern-0.25ex\right\vert\kern-0.25ex\right\vert}}
\begin{document}

\title{Universal Time-Uniform Trajectory Approximation for Random Dynamical Systems with Recurrent Neural Networks}
\author{Adrian N. Bishop}
\date{}

\maketitle

\begin{abstract}
The capability of recurrent neural networks to approximate trajectories of a random dynamical system, with random inputs, on non-compact domains, and over an indefinite or infinite time horizon is considered. The main result states that certain random trajectories over an infinite time horizon may be approximated to any desired accuracy, uniformly in time, by a certain class of deep recurrent neural networks, with simple feedback structures. The formulation here contrasts with related literature on this topic, much of which is restricted to compact state spaces and finite time intervals. The model conditions required here are natural, mild, and easy to test, and the proof is very simple.
\end{abstract}


\section{Introduction}

Universal approximation results for feedforward neural networks \cite{Haykin2009a} state that certain function classes can be approximated to any accuracy by large networks of simple compositions of simple activation functions. The survey \cite{Pinkus1999a} reviews seminal work on the approximation capabilities of feedforward neural networks with a single hidden layer of arbitrary width and arbitrary activations. Universal approximation with networks of fixed width but arbitrary depth is studied in \cite{HaninSellke2017a,LuPuWangEtAl2017a,KidgerLyons2020a,ParkYunLeeEtAl2021a}. Efficiency/optimality of approximation is studied in \cite{Barron1994a,Telgarsky2016a,Yarotsky2018a,ElbrachterPerekrestenkoGrohsEtAl2021a,WangQu2022a}. Differing function classes, domains, error rates, network structures, activation functions, etc, are considered. 

 A recurrent neural network (RNN) incorporates feedback from the output of some activation functions to the input of others with, say, a unit time delay \cite[e.g. Figures 1 and 2]{Elman1990a}. Thus, RNN's can model dynamical systems with feedback and external inputs; i.e. iterations of a map of an input \textit{and} the state of the system at a prior time, to the output state at the current time \cite{Elman1990a,Haykin2009a}. This article concerns only discrete-time RNNs, e.g. as in \cite{PascanuGulcehreChoEtAl2014a}. 

We pause to distinguish two high-level concepts of approximation accuracy in the context of RNNs. The first concerns \textit{map approximation}: in this case, the RNN is frozen in time and the accuracy of the map from the prior time state of the network and the input space, to the output space or current state of the network is appraised (relative to some ground truth). Generally, map approximation is just a reformulation of the approximation theory of feedforward neural networks \cite{Doya1993a}. The second concerns \textit{trajectory approximation}: in this case, the accuracy of the RNN's output trajectory, on some interval, relative to some ground truth trajectory is appraised. Accuracy in this case is a product of both the map approximation error and the accumulation of that error over time due to iterated compositions. 

The capability of RNNs to approximate discrete-time state-space models and trajectories on finite time intervals is studied in \cite{Li1992a,Sontag1992a,Doya1993a,JinNikiforukGupta1995a,SchaeferZimmermann2007a,Patan2008a,KimPatronBraatz2011a}. Map approximation on its own is studied in \cite{Doya1993a}; but such results follow directly from the approximation capabilities of feedforward networks. The article \cite{JinNikiforukGupta1995a} studies the approximation of certain trajectories of state-space models with bounded inputs. The idea in \cite{JinNikiforukGupta1995a} relies strictly on the assumption of a finite time interval of interest. In that case, the trajectory is constrained to a compact set and a continuous map may be assumed to be Lipschitz. Any error due to map approximation then grows at most in time (i.e. through repeated map iterations) by a factor determined by the maximum approximation error in a single-step (taken over a compact set) and the iterated Lipschitz constant of the map. By picking the single-step map approximation error small enough, the accumulated error can be bounded below a desired threshold when the horizon is given and finite. The results in \cite{SchaeferZimmermann2007a,Patan2008a,KimPatronBraatz2011a} are mostly the same. 

 In this article we study \textbf{universal \textit{time-uniform} trajectory approximation}: i.e. we consider the properties of very general (random) dynamical systems and the RNN characteristics that enable trajectory approximation to any desired accuracy, uniformly in time, over an indefinite or infinite time interval. The trick in this case is to ensure the map approximation errors are uniformly controlled in space and the error incurred at each time step is forgotten fast enough so that the total error does not accumulate unbounded. It is really the imposition of conditions on the dynamical system to be approximated that facilitate the result in this article. 

The work in \cite{Matthews1993a,MatthewsMoschytz1994a} considered the problem of time-uniform trajectory approximation with RNNs for deterministic discrete-time dynamical systems on compact domains with bounded inputs. A stability condition is imposed on the system approximated such that the approximation error incurred at any single time is forgotten fast enough so that the sum of errors does not grow unbounded. The condition imposed on the system in \cite{Matthews1993a} is a type of so-called fading memory condition, see \cite{BoydChua1985a}; and it is essentially a type of contractive stability condition. See also \cite{Sandberg1991a} where stronger conditions are imposed. Fading memory-type conditions were later applied to derive time-uniform trajectory approximation results in the setting of so-called echo state networks (i.e. RNNs in which most hidden layer parameters are chosen at random and only the output layer parameters are learnt in training), see \cite{GrigoryevaOrtega2018b,GrigoryevaOrtega2018a,HartHookDawes2020a,GononOrtega2020a,GononOrtega2021a}. 

Approximation of continuous-time trajectories (e.g. ordinary differential equations) with continuous time RNNs is studied in \cite{Sontag1992a,FunahashiNakamura1993a,KosmatopoulosPolycarpouChristodoulouEtAl1995a,ChowLi2000a,LiHoChow2005a}. That work is analogous to \cite{JinNikiforukGupta1995a,SchaeferZimmermann2007a}; assuming a fixed finite time horizon, trajectories restricted to a compact set, and a map approximation error small enough so that the accumulated error (which grows, typically exponentially) remains below a desired threshold over the finite horizon. Time-uniform approximation of stable ordinary differential equations with bounded inputs using continuous-time RNNs is introduced in \cite{HansonRaginsky2020a}; with trajectories restricted to a compact set. A special case of continuous-time approximation of periodic orbits is considered in \cite{NakamuraNakagawa2009a}. 

With regards to time-uniform approximation, we comment further on \cite{Matthews1993a}, the work in echo state networks \cite{GrigoryevaOrtega2018a,GononOrtega2020a}, and the continuous-time setting in \cite{HansonRaginsky2020a}, in a later section; after which the notation, certain technical concepts, and the main result and its proof can be easily referenced. 

There are considerable differences in the approach taken in this article, when compared with existing literature. We remark on some characteristics of the formulation, methods, and results in this article worth noting here. We consider discrete-time dynamical systems and trajectories on non-compact spatial domains and over indefinite/infinite time intervals. We consider random dynamical systems with (random) unbounded inputs. We consider approximation in the natural sense of $p$-moments. This theory does not preclude the special case of deterministic state-space models, or trajectories evolving in some compact set. However, it is more natural in some common applications of RNN-based trajectory approximation to consider the formulation herein. The conditions imposed are minimal, testable, and the proof is simple and transparent in its application of the conditions so as to yield some pedagogical value. The formulation and presentation is framed in the language of (random) dynamical systems and state-space models. We provide examples, both empirical and derived, that illustrate the natural satisfaction of the assumptions and motivate the result. As a side project, we also give a different network construction for universal approximation of finite length trajectories that complements \cite{JinNikiforukGupta1995a,SchaeferZimmermann2007a}.

We postpone further discussion until later; when more detail can be referenced. Later we discuss the practical relevance (in design/training) and applicability of this work; we comment on the main result and its relationship to the perturbation theory of dynamical systems, and to related work \cite{Matthews1993a,GrigoryevaOrtega2018a,GononOrtega2020a,HansonRaginsky2020a}; we touch on the approximation of Lipschitz maps; and we contrast this work with trajectory approximation on finite time intervals and the relevance of this distinction; and we discuss extensions.

\section{Dynamical Systems}

Let $\mathbb{X}\subseteq\mathbb{R}^{d_x}$ and let $\|\cdot\|:\mathbb{X}\rightarrow[0,\infty)$ be any norm on $\mathbb{X}$. Let $(\mathbb{X}, \mathcal{B}(\mathbb{X}))$ be a standard Borel space. Let $t\in\mathbb{N}:=\{0,1,2,\ldots\}$. Consider the (Borel) measurable endomorphism $f:\mathbb{X}\rightarrow\mathbb{X}$. A dynamical system \cite{CornfeldFominSinai1982a} is just the compositional iteration of $f$, acting on some initial point $x_0\in\mathbb{X}$, that is,
\begin{equation}
	x_t \,=\,  f(x_{t-1}) \,=\,  f^t(x_0)  \label{recursiveSysDef}
\end{equation}
with $f^0=\mathrm{Id}$; and composition often written as multiplication. The solution $\{f^t(x_0) \,|\,t\in\mathbb{N}\}$ traces a sequence of points in $\mathbb{X}$ which we call the \textit{trajectory} of the dynamical system from $x_0\in\mathbb{X}$. 

The Polish space of all probability measures on $\mathbb{X}$ is denoted by $\mathcal{P}(\mathbb{X})$. On $(\mathcal{P}(\mathbb{X}), \mathcal{B}(\mathcal{P}(\mathbb{X}))$ we have the push-forward operation $f_\#\mu(A) := \mu(f^{-1}(A))$, $\forall A\in\mathcal{B}(\mathbb{X})$ with $f^0_\#:=\mathrm{Id}$. Treating measures $\mu_t\in\mathcal{P}(\mathbb{X})$ as points in a Polish space, we write, as in (\ref{recursiveSysDef}), the shorthand for the dynamical system,
\begin{equation}
	\mu_t \,=\, f_\#\mu_{t-1} \label{recursiveSysMeas}
\end{equation}
and consider the trajectory $\{f^t_\#\mu_{0}  \,|\,t\in\mathbb{N}\}$ of points in $\mathcal{P}(\mathbb{X})$ for a sequence $t\in\mathbb{N}$.

An \textit{invariant measure} for $f$ is any $\mu\in\mathcal{P}(\mathbb{X})$ with $\mu(f^{-1}(A)) = \mu(A)$ for all $A\in\mathcal{B}(\mathbb{X})$. An invariant measure is a fixed-point trajectory $\mu = f^t_\#\mu$ for all $t\in\mathbb{N}$ in $\mathcal{P}(\mathbb{X})$. Fixing the space $(\mathbb{X},\mathcal{B},\mu)$ first, a dynamical system is called \textit{measure-preserving} if $\mu = f^t_\#\mu$. See \cite{CornfeldFominSinai1982a} for general background.

\subsection{Random Dynamical Systems}

Fix throughout the standard probability space $(\Omega,\mathcal{B}(\Omega),\mathsf{P})$. Consider the $\mathfrak{L}{\mathrm{ip}}(\mathbb{X})$-valued\footnote{We denote by $\mathcal{L}\mathrm{ip}(\mathbb{X})$ the space of Lipschitz endomorphisms on $\mathbb{X}$ with a finite Lipschitz constant.} random process defined by the indexed family $\{F_t \,|\, t\in\mathbb{N}\}$ of measurable maps $F_t : \Omega\rightarrow\mathfrak{L}{\mathrm{ip}}(\mathbb{X})$. We write $F_t(\omega)(\cdot):\mathbb{X}\rightarrow\mathbb{X}$ for a realised $\omega\in\Omega$ sample function. This process is well defined, see e.g. \cite{DiaconisFreedman1999a}.

A random dynamical system on $\mathbb{X}$ is defined by the compositional iteration of random (Lipschitz) endomorphisms $F_t$ acting on some initial $x_0\in\mathbb{X}$,
\begin{equation}
	X^{x_0}_t \,=\,  F_{t} \cdots  F_2 \cdot F_1(x_0) \,=\,  F_{t}(X^{x_0}_{t-1}) \label{recursiveRandSysDef}
\end{equation}
The indexed family $\{X_t^{x_0} \,|\,t\in\mathbb{N},x_0\in\mathbb{X} \}$ of random variables $X^{x_0}_t:\Omega\rightarrow\mathbb{X}$ is an induced random process\footnote{We write $X^{\mu_0}_t$ if $X_0:\Omega\rightarrow\mathbb{X}$ is random with distribution $\mu_0\in\mathcal{P}(\mathbb{X})$. In general we write $X'\sim\mu$ if $X'$ is random with law $\mu\in\mathcal{P}(\mathbb{X})$. We write $X^{X'}_t$ if $X_0=X'$ almost surely. We often drop reference to the initial condition.} on $\mathbb{X}$.  The realised solution $\{X_t^{x_0}(\omega) \,|\,t\in\mathbb{N}\}$ is the \textit{trajectory} of the random dynamical system from the state $x_0\in\mathbb{X}$ with $\omega\in\Omega$.

Let $\mathbb{U}\subseteq\mathbb{R}^{d_u}$. Let $(\mathbb{U}, \mathcal{B}(\mathbb{U}))$ be a standard Borel space. The indexed family $\{U_t \,|\, t\in\mathbb{N}\}$ of random variables $U_t:\Omega\rightarrow\mathbb{U}$ is a random process. We consider the maps in (\ref{recursiveRandSysDef}) of the form,
\begin{equation}
	F_{t}(\omega)(\cdot) \,:=\, f(\cdot,U_{t}(\omega))  \label{recursiveRandSysInputDef}
\end{equation}
where $f:\mathbb{X}\times\mathbb{U}\rightarrow\mathbb{X}$ is a deterministic, time-invariant, measurable map, Lipschitz in $\mathbb{X}$.

See \cite{Ohno1983a,ArnoldCrauel1992a,DiaconisFreedman1999a,Arnold2003a} for further background on these systems. Note that an independent and identically distributed sequence of maps does not induce an independent process in $X_t$, but it is Markov \cite{Kifer1986a}; and, a stationary Markov sequence of maps does not induce a Markov process in $X_t$. A stationary process of certain contractive maps is, with the right initialisation, a stationary process in $X_t$ \cite{Elton1990a}.

\begin{lemma}\cite{Kifer1986a}\label{markoviidlemma}
	The process $\{X_t^{x_0}\,|\,t\in\mathbb{N}\}$ in (\ref{recursiveRandSysDef}) is Markov for all $x_0\in\mathbb{X}$ if and only if the sequence of random maps $\{F_t \,|\, t\in\mathbb{N}\}$ is independently distributed. A Markov process $\{X_t^{x_0} \,|\,t\in\mathbb{N}\}$ in (\ref{recursiveRandSysDef}) is homogenous for all $x_0\in\mathbb{X}$ if and only if the sequence $\{F_t \,|\, t\in\mathbb{N}\}$ is identically distributed. 
\end{lemma}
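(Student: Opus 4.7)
The plan is to treat the two biconditionals separately. Three of the four implications are direct; the fourth---necessity of map independence given the Markov property---is the content for which the reference to Kifer is really needed.

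\textbf{Markov iff independent maps.} For the ``if'' direction, assume $\{F_t\}$ are independent. By (\ref{recursiveRandSysDef}), $X_{t-1}^{x_0}$ is $\sigma(F_1,\ldots,F_{t-1})$-measurable, and this $\sigma$-algebra is independent of $\sigma(F_t)$. Hence for any Borel $A\subseteq\mathbb{X}$,
$$\mathsf{P}(X_t^{x_0}\in A\,|\,X_0^{x_0},\ldots,X_{t-1}^{x_0})\,=\,\mathsf{P}(F_t(X_{t-1}^{x_0})\in A\,|\,X_0^{x_0},\ldots,X_{t-1}^{x_0})\,=\,K_t(X_{t-1}^{x_0},A),$$
where $K_t(y,A):=\mathsf{P}(F_t(y)\in A)$ is obtained by freezing $y$ and integrating out $F_t$ (using independence). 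Since the right-hand side depends only on $X_{t-1}^{x_0}$, the process is Markov. For the converse, the ``for all $x_0$'' quantifier is essential. If the process is Markov then there is a kernel $K$ so that, conditional on $X_{t-1}^{x_0}=y$, the law of $F_t(y)=X_t^{x_0}$ is $K(y,\cdot)$, independent of whichever realisation of $(F_1,\ldots,F_{t-1})$ steered the system from $x_0$ to $y$. By varying $x_0$, the family of conditional laws of $(F_1,\ldots,F_{t-1})$ given $\{X_{t-1}^{x_0}=y\}$ is rich enough to force unconditional independence of $F_t$ from $(F_1,\ldots,F_{t-1})$; I would follow the construction in \cite{Kifer1986a} to make this step rigorous.

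\textbf{Homogeneous iff identically distributed maps.} Given the first equivalence, we may assume independence, so the transition kernel is $K_t(y,A)=\mathsf{P}(F_t(y)\in A)$. Homogeneity means $K_t=K_s$ for all $s,t$, i.e.\ $F_t(y)\stackrel{d}{=}F_s(y)$ for every $y\in\mathbb{X}$. Since the $F_t$ are $\mathcal{L}\mathrm{ip}(\mathbb{X})$-valued and hence continuous, their law is determined by the finite-dimensional distributions of their evaluations on any countable dense subset of $\mathbb{X}$, so pointwise equality of one-dimensional laws in $y$ upgrades to equality in distribution of the random maps---which is exactly ``identically distributed''. The converse is immediate: if the maps have a common law then so do their pointwise evaluations, and $K_t$ does not depend on $t$.

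The main, and really only nontrivial, obstacle is the necessity direction of the first equivalence: extracting joint independence of the maps from the bare Markov property holding at every initial condition. The strength of the ``for all $x_0$'' hypothesis must be mobilised here, and the detailed construction is the content actually proved in \cite{Kifer1986a}.
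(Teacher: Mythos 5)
The paper gives no proof of this lemma at all: it is imported verbatim from the cited reference, so there is no in-paper argument to compare yours against, and your proposal has to stand on its own. On that basis: the freezing-lemma argument for ``independent maps $\Rightarrow$ Markov'' is standard and correct, as is ``identically distributed $\Rightarrow$ homogeneous''. The necessity of independence in the first equivalence is only gestured at and deferred to the reference; that is honest, and consistent with the paper treating the whole statement as a citation, but it does mean your write-up does not actually establish that direction.

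There is, however, a genuine error in the necessity half of the second equivalence. Homogeneity gives you $F_t(y)\stackrel{d}{=}F_s(y)$ for each \emph{fixed} $y$, i.e.\ equality of the one-point evaluation laws. Your claimed upgrade to equality in law of the random maps themselves does not follow: the law of a random element of $\mathcal{L}\mathrm{ip}(\mathbb{X})$ is determined by its finite-dimensional distributions on a dense set, meaning the \emph{joint} laws of $(F(y_1),\ldots,F(y_n))$ for all $n$, whereas you only control the case $n=1$. Concretely, on $\mathbb{X}=\mathbb{R}$ let $\xi,\eta$ be i.i.d.\ uniform on $\{1/2,2\}$, let $F_1(x)=\xi x$, and let $F_2(x)=\xi x$ for $x\geq 0$ and $F_2(x)=\eta x$ for $x<0$. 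Both are Lipschitz, and for every $y$ the law of $F_i(y)$ is uniform on $\{y/2,2y\}$, yet $(F_1(1),F_1(-1))=(\xi,-\xi)$ is supported on two points while $(F_2(1),F_2(-1))=(\xi,-\eta)$ is supported on four, so $F_1$ and $F_2$ are not equal in law as random maps. Hence either ``identically distributed'' in the lemma must be read as equality of the one-point evaluation laws (equivalently, of the induced transition kernels), in which case there is nothing left to prove, or an additional argument from the cited construction is required; your continuity-plus-density step does not supply it.
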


Consider an invertible measure-preserving endomorphism $\theta:\Omega\rightarrow\Omega$ on $(\Omega,\mathcal{B}(\Omega),\mathsf{P})$. If $F_t(\omega) = F(\theta^{t}(\omega))$ for a random $F : \Omega\rightarrow\mathfrak{L}{\mathrm{ip}}(\mathbb{X})$, then $\{F_t \,|\, t\in\mathbb{N}\}$ is stationary \cite[Appendix A]{Arnold2003a}. An extended\footnote{It is possible to extend a sequence of stationary random maps $\{F_t \,|\, t\in\mathbb{N}\}$ over all the integers $\mathbb{Z}$ such that the doubly infinite process $\{F_t \,|\, t\in\mathbb{Z}\}$ is stationary and the law of any combination of maps with non-negative time indices in the extended sequence is the same as in the original sequence; e.g. see \cite[Lemma 1]{Elton1990a}, Doob \cite[page 456]{Doob1953a}, or \cite[Appendix A]{Arnold2003a}.} stationary sequence $\{F_t \,|\, t\in\mathbb{Z}\}$ is identically distributed (but not vice versa).

The Lipschitz constant of $F\in\mathfrak{L}{\mathrm{ip}}(\mathbb{X})$ is the smallest $\mathfrak{L}{(F)}:\Omega\rightarrow[0,\infty)$ such that,
\begin{equation}
	\|F(\omega)(x) - F(\omega)(x_0 )\| \,\leq\, \mathfrak{L}{(F)}(\omega)\,\|x - x_0\|
\end{equation}
A particular Lipschitz map is called \textit{contractive} if its Lipschitz constant is $<1$. Loosely speaking, if the sequence $\{F_t \,|\, t\in\mathbb{Z}\}$ is stationary and over time the maps are contracting in some average sense, then the induced process $\{X_t\,|\,t\in\mathbb{N}\}$ converges in distribution to a stationary process. The induced process also exhibits some contractive stability and initial states are forgotten. For example, if $\{F_t \,|\, t\in\mathbb{Z}\}$ is independent and identically distributed, then the condition $\mathsf{E}[\log\mathfrak{L}{(F)}]<0$ formalises a notion of \textit{contracting in the average}, and leads to the stated conclusions; see \cite{DiaconisFreedman1999a}. We will not generally impose an independence assumption on the maps, i.e. we do not assume $\{X_t\,|\,t\in\mathbb{N}\}$ is Markov. We also assume a weaker notion, in some directions at least, of contracting in the average. The following result is an extension of a classical result of Elton \cite{Elton1990a}, see also \cite{DebalyTruquet2021a}.

\begin{lemma}\label{eltonlemma}
Assume $\{F_t(\omega) = F(\theta^{t}(\omega)) \,|\, t\in\mathbb{Z}\}$ is stationary and $\mathsf{E}[\|F(x)-x\|^p]<\infty$ for one $x\in\mathbb{X}$ and some $p\geq1$. Assume
there is a finite $C>0$, and $\lambda\in(0,1)$, all independent of $s,t\in\mathbb{N}$, such that,
\begin{align}
	\mathsf{E} \big[ \,\| F_{t}  \cdots  F_{s+1}(X^{x}_s) - F_{t}  \cdots  F_{s+1}({X}^{x_0}_s) \|^p \,\big] 
	  ~&\leq~ C\,\lambda^{(t-s)}\, \mathsf{E} \big[\, \|{X}^{x}_{s} - {X}^{x_0}_{s} \|^p \,\big] \label{stabilitycond}
\end{align}
exists for all $x,x_0\in\mathbb{X}$ and all $t> s$. Then there is a random variable, $X_\infty:\Omega\rightarrow\mathbb{X}$, such that,
 \begin{equation}
		\lefteqn{\overbrace{\phantom{\lim_{t\rightarrow\infty}~X_t^{x_0} \,=\, \lim_{t\rightarrow\infty} ~F_{t} \cdots  F_2 \cdot F_1(x_0) ~\stackrel{\text{law}}{=}~ X_\infty}}^{forward~convergence~in~distribution}}\lim_{t\rightarrow\infty}~X_t^{x_0} \,=\, \lim_{t\rightarrow\infty} ~F_{t} \cdots  F_2 \cdot F_1(x_0) ~\stackrel{\text{law}}{=}~ \underbrace{X_\infty~=~ \lim_{t\rightarrow\infty} ~F_{0}\cdot F_{-1}  \cdots  F_{-t}(x_0)}_{backward~almost~sure~convergence} \label{convergencelemmaeqn}
\end{equation} 
and $X_\infty$ is independent of $x_0$, i.e, $\mathsf{P}(X_\infty(x_0)\neq X_\infty(x))=0$. The process $\{X_t^{\mu_0}\,|\,t\in\mathbb{N}\}$ with $\mu_0:=\mathrm{Law}(X_{\infty})$ is stationary in $\mathbb{X}$. If $\{F_t \,|\, t\in\mathbb{N}\}$ is also ergodic, then $\{X_t \,|\, t\in\mathbb{N}\}$ is ergodic. 
\end{lemma}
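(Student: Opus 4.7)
The plan is a backward-coupling (Elton-type) argument, executed on the extended doubly-infinite stationary sequence $\{F_t\,|\,t\in\mathbb{Z}\}$ (per the footnoted construction). Define the backward compositions $Z_t^{x_0} := F_0\circ F_{-1}\circ\cdots\circ F_{-t}(x_0)$. The main object is to establish the almost-sure backward limit in (\ref{convergencelemmaeqn}); the remaining claims will follow from it together with stationarity of the driving sequence.

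For almost-sure backward convergence I would show that $\{Z_t^{x_0}\}$ is Cauchy almost surely. By stationarity, the block $(F_{-t-1}, F_{-t},\ldots, F_0)$ equals in law $(F_1, F_2,\ldots, F_{t+2})$, so
\begin{equation}
\mathsf{E}\|Z_{t+1}^{x_0} - Z_t^{x_0}\|^p \;=\; \mathsf{E}\bigl\|F_{t+2}\circ\cdots\circ F_2(F_1(x_0)) - F_{t+2}\circ\cdots\circ F_2(x_0)\bigr\|^p.
\end{equation}
Applying (\ref{stabilitycond}) with $s=1$ to the $\sigma(F_1)$-measurable pair of initial conditions $(F_1(x_0),x_0)$, together with $F_1\stackrel{d}{=}F$, yields $\mathsf{E}\|Z_{t+1}^{x_0}-Z_t^{x_0}\|^p \leq C\lambda^{t+1}\mathsf{E}\|F(x_0)-x_0\|^p$. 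The right-hand side is finite at every $x_0$ by the hypothesis at one $x$, the single-step case of (\ref{stabilitycond}), and the triangle inequality in $L^p$. Jensen's inequality (using $p\geq 1$) then gives $\sum_t\mathsf{E}\|Z_{t+1}^{x_0}-Z_t^{x_0}\|<\infty$, so by Borel--Cantelli $\sum_t\|Z_{t+1}^{x_0}-Z_t^{x_0}\|<\infty$ almost surely and $Z_t^{x_0}\to X_\infty(x_0)$ a.s.

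For the remaining claims: independence of $X_\infty$ from $x_0$ follows because $(F_{-t},\ldots,F_0)\stackrel{d}{=}(F_1,\ldots,F_{t+1})$ lets us transport (\ref{stabilitycond}) with $s=0$ directly to the backward block at deterministic initial conditions, giving $\mathsf{E}\|Z_t^{x_0}-Z_t^{x_1}\|^p\leq C\lambda^{t+1}\|x_0-x_1\|^p\to 0$, which forces $X_\infty(x_0)=X_\infty(x_1)$ a.s. Forward convergence in law follows since $X_{t+1}^{x_0}\stackrel{d}{=}Z_t^{x_0}$ (the same measurable functional of stationary shifted blocks of the driving sequence) combined with $Z_t^{x_0}\to X_\infty$ a.s. For stationarity of $\{X_t^{\mu_0}\}$, define $X_t^*:=\lim_{s\to\infty} F_t\circ F_{t-1}\circ\cdots\circ F_{t-s}(x_0)$ for each $t\in\mathbb{Z}$ by shifting the previous argument; each $X_t^*$ has law $\mu_0:=\mathrm{Law}(X_\infty)$ and satisfies $X_t^*=F_t(X_{t-1}^*)$ a.s.\ by Lipschitz continuity, and $\{X_t^*\}_{t\in\mathbb{Z}}$ is a measurable functional of the stationary driving sequence, hence itself stationary; its non-negative-time restriction matches $\{X_t^{\mu_0}\}$ in law. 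Ergodicity is inherited by measurable factor as soon as $\{F_t\}$ is ergodic under $\theta$.

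The main obstacle is the Cauchy step: the stability condition (\ref{stabilitycond}) is stated with the $\mathcal{F}_s$-measurable forward-trajectory pair $(X_s^x, X_s^{x_0})$ on the right, whereas the argument needs it on the $\sigma(F_1)$-measurable pair $(F_1(x_0), x_0)$ where the ``second trajectory'' stays put rather than evolving under the dynamics. In the i.i.d.\ setting this extension to arbitrary $\mathcal{F}_s$-measurable pairs is automatic by conditioning on $\sigma(F_1,\ldots,F_s)$, and in the general stationary setting it is the natural reading of (\ref{stabilitycond}) alluded to by the cited extension in \cite{DebalyTruquet2021a}. Once this interpretive step is granted, the remainder is a standard Cauchy/Borel--Cantelli argument combined with the measurable-factor inheritance of stationarity and ergodicity.
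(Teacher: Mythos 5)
The paper does not actually prove Lemma~\ref{eltonlemma}: it is stated as an extension of Elton's theorem and attributed to \cite{Elton1990a,DebalyTruquet2021a}, so there is no in-paper argument to compare against line by line. Your backward-iteration (Letac/Elton) scheme --- show $Z_t^{x_0}:=F_0\cdots F_{-t}(x_0)$ is a.s.\ Cauchy via a summable telescoping bound, transfer to forward convergence in law by stationarity of blocks, kill the dependence on $x_0$ with the $s=0$ case of (\ref{stabilitycond}), and build the two-sided stationary solution $X_t^*$ as a shift-equivariant measurable factor to get stationarity and ergodicity --- is exactly the standard route those references take, and every step except one is executed correctly under the stated hypotheses.

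The one step you flag is a genuine gap, not merely an interpretive nicety, and it is worth being precise about why. After shifting, the Cauchy increment requires a bound on $\mathsf{E}\|F_{t+2}\cdots F_2(F_1(x_0))-F_{t+2}\cdots F_2(x_0)\|^p$, i.e.\ the contraction applied to the pair $(F_1(x_0),\,x_0)$. Condition (\ref{stabilitycond}) with $s=1$ only covers pairs of the form $(F_1(x),F_1(x_0))$ --- both coordinates are time-$1$ states of forward trajectories --- and the pair you need has one coordinate that has not been pushed through $F_1$. In the i.i.d.\ case this is harmless because one conditions on $\sigma(F_1)$ and applies the ($s=0$) bound to the independent future block; in the merely stationary case that conditioning is unavailable, and the unconditional form (\ref{stabilitycond}) does not self-improve to arbitrary $\sigma(F_1,\ldots,F_s)$-measurable initial pairs. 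The references close this by assuming the contraction in a conditional form, $\mathsf{E}\big[\|F_t\cdots F_{s+1}(x)-F_t\cdots F_{s+1}(y)\|^p\,\big|\,\mathcal{F}_s\big]\leq C\lambda^{t-s}\|x-y\|^p$ for deterministic $x,y$, which extends to $\mathcal{F}_s$-measurable pairs by a monotone-class argument and then yields your estimate with the moment hypothesis $\mathsf{E}\|F(x)-x\|^p<\infty$. So your proof is the right proof of the right lemma, but it establishes the conclusion under a (slightly stronger, conditional) reading of (\ref{stabilitycond}) rather than under the hypothesis exactly as printed; to be self-contained you should either state that strengthened hypothesis or supply the monotone-class extension explicitly. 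Two smaller points: the a.s.\ summability of $\sum_t\|Z_{t+1}^{x_0}-Z_t^{x_0}\|$ follows from Tonelli/monotone convergence rather than Borel--Cantelli, and Cauchy-to-convergent needs $\mathbb{X}$ closed in $\mathbb{R}^{d_x}$ (or the limit taken in the closure).
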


The condition of interest is (\ref{stabilitycond}) which asks for the composition of maps to \textit{eventually be contracting on average}; e.g. see the terminology in \cite{Steinsaltz1999a,WuShao2004a}. If $\{F_t \,|\, t\in\mathbb{N}\}$ is also independent, then $\{X_t^{x_0}\,|\,t\in\mathbb{N}\}$ is Markov and the backward limit is given by $X_\infty = \lim_{t\rightarrow\infty} ~F_{1}\cdot F_{2}  \cdots  F_{t}(x_0)$, see \cite{DiaconisFreedman1999a}. Variations on the type of average contraction, e.g. on \eqref{stabilitycond}, that lead (mostly) to the same conclusions are studied in the literature, particularly when $\{X_t\,|\,t\in\mathbb{N}\}$ is Markov, or when $F_t$ is affine or takes values in a finite subset of $\mathfrak{L}{\mathrm{ip}}(\mathbb{X})$. See \cite{BarnsleyElton1988a,Elton1990a, ArnoldCrauel1992a,BurtonRoesler1995a,Steinsaltz1999a,WuShao2004a,Stenflo2012a}. 

A sequence of random maps $\{F_t \,|\, t\in\mathbb{N}\}$, not necessarily stationary, that satisfies \eqref{stabilitycond} is called \textit{exponentially $p$-contractive}.

\subsection{Example: Complex Trajectories from Simple Dynamics} \label{barnsleysfernexample1}

Let $\Omega=\{1,\ldots,K\}^\mathbb{N}$ with finite $K\in\mathbb{N}$, let $\mathsf{p}=(p_1,\ldots,p_K)$, $\sum_ip_i=1$, and let $\mathsf{P}=\mathsf{p}^\mathbb{N}$. Each $\omega\in\Omega$ has representation $\omega=(\omega_0,\omega_1,\ldots)$. Let $\theta:\Omega\rightarrow\Omega$ be the left shift $\theta^t(\omega)=(\omega_t,\omega_{t+1}\ldots)$, defining a measure preserving $\theta^t_\#\mathsf{P}=\mathsf{P}$ dynamical system on $(\Omega,\mathcal{B}(\Omega),\mathsf{P})$. 
Let $\{G_i:\mathbb{X}\rightarrow\mathbb{X}\,|\,i\in\{1,\ldots,K\}\}$ be a collection of deterministic Lipschitz maps. Define a random map by,
\begin{align}
	F: \Omega\rightarrow \{G_1,\ldots,G_K\} \subset \mathfrak{L}{\mathrm{ip}}(\mathbb{X}) \qquad\mathrm{with}\qquad
	\theta^t(\omega)\mapsto F(\theta^t(\omega)) = G_{\omega_t}
\end{align}
The sequence $\{F_t(\omega) = F(\theta^{t}(\omega)) \,|\, t\in\mathbb{N}\}$ is an independent and identically distributed sequence. For any $x\in\mathbb{X}$, an induced process is generated by independently picking $G_i$ with probability $p_i$ and moving to $G_i(x)$. The induced process $X^{x_0}_t = F_{t} \cdots  F_2 \cdot F_1(x_0)$, see (\ref{recursiveRandSysDef}), is a Markov chain. 

Remarkably complicated and interesting induced trajectory behaviour may arise from this seemingly simple system, even in low dimension with $K$ small. For example, let $\mathbb{X}=\mathbb{R}^2$, $K=4$ and,
\begin{align}
	G_1(x) \,=\, \begin{bmatrix}\ 0.00&\ 0.00\ \\0.00&\ 0.16\end{bmatrix}x+{\begin{bmatrix}\ 0.00\\0.00\end{bmatrix}},\quad G_2(x) \,=\, {\begin{bmatrix}\ 0.85&\ 0.04\ \\-0.04&\ 0.85\end{bmatrix}}x+{\begin{bmatrix}\ 0.00\\1.60\end{bmatrix}}, \\
	G_3(x) \,=\,  {\begin{bmatrix}\ 0.20&\ -0.26\ \\0.23&\ 0.22\end{bmatrix}}x+{\begin{bmatrix}\ 0.00\\1.60\end{bmatrix}}, \quad G_4(x) \,=\, {\begin{bmatrix}\ -0.15&\ 0.28\ \\0.26&\ 0.24\end{bmatrix}}x+{\begin{bmatrix}\ 0.00\\0.44\end{bmatrix}}
\end{align}
with $\mathsf{p}=(0.01, 0.85, 0.07, 0.07)$. Lemma \ref{eltonlemma} trivially applies here. Set $x_0=0$ and iterate the induced random process over, say, $T=1000$, $T=10000$ and $T=100000$ time-steps. The resulting empirical trajectory in phase space is plotted in Figure \ref{fig:barnsleysfern}, and is known as Barnsley's fern \cite{BarnsleyElton1988a}.

\begin{figure}[!ht]
    \centering
    \includegraphics[width=0.75\textwidth]{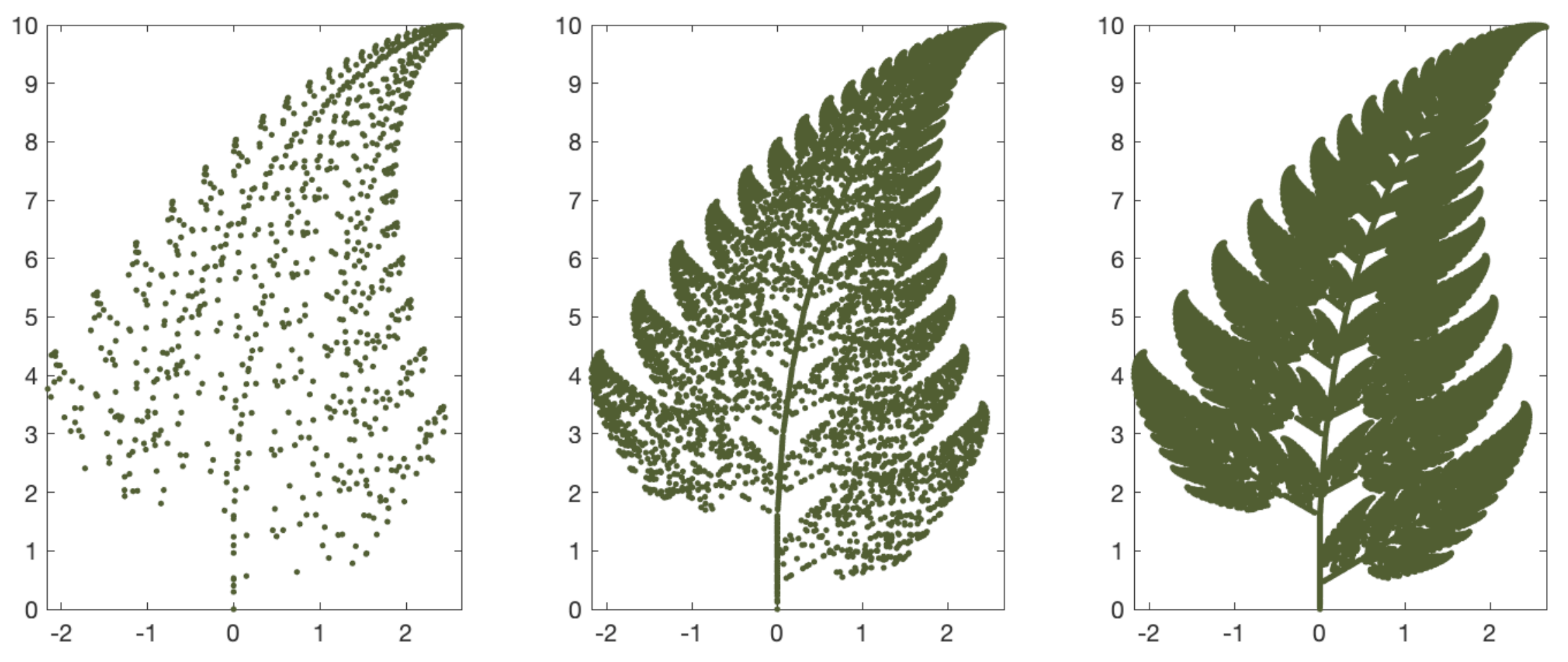}
    \caption{The trajectory of Barnsley's fern process with $T=1000$, $T=10000$ and $T=100000$.}
    \label{fig:barnsleysfern}
\end{figure}

We do not attempt to survey the rich and interesting nature of this type of random dynamical system or its ergodic and invariant measure properties; see e.g. \cite{Hutchinson1981a,BarnsleyElton1988a}. We simply note that the forward induced process generating Barnsley's fern does not converge to any point (obviously) but instead wanders ergodically through the state space in accordance with its unique invariant measure (which happens to place most mass in the shape of the depicted fern, e.g. as in the right-most image in Figure \ref{fig:barnsleysfern}).

\subsection{Example: Contraction on Average with Non-Contractive Maps}

Assume the independent setup in Section \ref{barnsleysfernexample1} and consider two non-contractive maps,
\begin{align}
	G_1(x) \,=\, \begin{bmatrix}\ 10/9 &\ 0\\ \ 0&\ 1/2\end{bmatrix}x +\mathbf{b}_1 \qquad G_2(x) \,=\, {\begin{bmatrix}\ 1/2&\ 0\ \\0&\ 10/9\end{bmatrix}}x +\mathbf{b}_2
\end{align}
with $\mathsf{p}=(0.5, 0.5)$ and any $\mathbf{b}_1$, $\mathbf{b}_2\in\mathbb{R}^2$. Then for any two initial points $x,x_0\in\mathbb{X}$ consider,
\begin{align}
	\mathsf{E} \big[ \, \|X_1^{x} - X_1^{x_0}\|  \,\big] \,\leq\,   \left(\frac{1}{2}\|\mathbf{A}_1\|\ + \frac{1}{2}\|\mathbf{A}_2\|\right)\|x-x_0\| \,=\, \frac{10}{9}\|x-x_0\|
\end{align}
with $G_i(x)=\mathbf{A}_ix+\mathbf{b}_i$. The map is non-contracting on average in a single iteration. However,
\begin{align}
	\mathsf{E} \big[ \, \|X_2^{x} - X_2^{x_0}\|  \,\big] \,\leq\,   \left(\frac{1}{4}\|\mathbf{A}_1\mathbf{A}_1\|\ +\frac{1}{2}\|\mathbf{A}_1\mathbf{A}_2\|+ \frac{1}{4}\|\mathbf{A}_2\mathbf{A}_2\|\right)\|x-x_0\| \,<\, \frac{9}{10}\|x-x_0\|
\end{align}
and the maps are contracting, on average, over two iterations. Necessary and sufficient conditions, replacing (\ref{stabilitycond}), for the conclusions of Lemma \ref{eltonlemma} to hold in the setting of random affine maps are given in \cite{BougerolPicard1992a}; and take advantage of a logarithm inside the expectation. Here we just provide a basic example for (\ref{stabilitycond}) to hold with each map non-contracting; much more complicated examples are possible. 

This example allows one to contrast the property of exponential $p$-contractivity, an average-type contraction property, with the deterministic fading memory-type property considered in \cite{BoydChua1985a,Matthews1993a,GrigoryevaOrtega2018a}.

\section{Recurrent Neural Network Architectures}

Let $\sigma(\cdot):=\max\{\cdot,0\}:\mathbb{R}\rightarrow\mathbb{H}\subseteq\mathbb{R}$; which acts component wise on a vector $v\in\mathbb{R}^d$ so $\sigma(v)\in\mathbb{H}^d$.

A general deep recurrent neural network (RNN), with $L\in\mathbb{N}$ layers, is given in the form,
\begin{align}
	\widehat{X}_t \,=\, {h}_{L,t} &\,=\, \tau_L({h}_{L-1,t}) \label{RNNform1} \\
	{h}_{l,t} &\,=\, \sigma\left(\tau_l({h}_{l-1,t}) + \varphi_l({h}_{t-1}) \right),\qquad l\in\{1,\ldots,L-1\} \label{RNNform2} \\
	{h}_{0,t} &\,:=\, U_{t}  \label{RNNform3}
\end{align}
where ${h}_{l,t}\in\mathbb{H}^{d_{h_l}}$ with $\mathbb{H}^{d_{h_0}}:=\mathbb{U}$ and $\mathbb{H}^{d_{h_L}}:=\mathbb{X}$. The feedback vector is ${h}_{t}=({h}_{1,t},\ldots,{h}_{L,t})\in\mathbb{H}^{d_h}$. The maps
\begin{equation} 
	\tau_l:\mathbb{H}^{d_{h_{l-1}}}\rightarrow\mathbb{R}^{d_{h_{l}}}  \qquad\mathrm{and}\qquad \varphi_l:\mathbb{H}^{d_h}\rightarrow\mathbb{R}^{d_{h_{l}}}
\end{equation}
are finite affine maps.

We write a generic deep RNN of the form (\ref{RNNform1}), (\ref{RNNform2}), (\ref{RNNform3}) as,
\begin{equation}
	\widehat{X}^{h_0}_t \,=\, \widehat{f}({h}_{t-1}, U_{t}) \,=:\, \widehat{F}_{t}({h}_{t-1}) \label{genericformRNN}
\end{equation} 
Specifying the parameters of the affine maps defines the topology of the network $\widehat{f}$. Specifying also the initial condition $h_0\in\mathbb{H}^{d_h}$ defines the resulting RNN output for a given input sequence. 

The form (\ref{RNNform1}), (\ref{RNNform2}), (\ref{RNNform3}) is very general, covering at least all those models in \cite{PascanuGulcehreChoEtAl2014a}. We will not construct any networks utilising this general form. Common specialisations of this structure (\ref{RNNform1}), (\ref{RNNform2}), (\ref{RNNform3}) involve feedback solely from the last activation layer to the first layer, and feedback from the output of each activation layer to itself, see \cite{PascanuGulcehreChoEtAl2014a}. 

Feedback from the last layer, or equivalently the last hidden layer, in (\ref{RNNform1}), (\ref{RNNform2}), (\ref{RNNform3}) amounts to a recursion on $\widehat{X}_t$. Consider the special form,
\begin{align}
	\widehat{X}_t \,=\, \tau_L\cdot\sigma\cdot\tau_{l-1}\cdots\sigma\cdot\tau_2\cdot\sigma(\tau_1(U_t)  + \varphi(\widehat{X}_{t-1}) ) \label{RNNspecialform}
\end{align}
where $\varphi:\mathbb{X}\rightarrow\mathbb{R}^{d_{h_1}}$ is a finite affine feedback map. This form (\ref{RNNspecialform}) of RNN is written as,
\begin{equation}
	\widehat{X}_t \,=\, \widehat{f}(\widehat{X}_{t-1}, U_{t}) \,=:\, \widehat{F}_{t}(\widehat{X}_{t-1}) \label{genericspecialformRNN}
\end{equation}

\subsection{Map Approximation versus Trajectory Approximation}

Map approximation seeks for any $\epsilon>0$ a network $\widehat{f}({h}, \cdot)$ and a network state ${h}\in\mathbb{H}^{d_h}$ such that,
\begin{equation}
	\mathsf{E}\big[ \,\| \widehat{f}({h},\cdot) -  {f}(x,\cdot) \|^p\, \big]^{1/p} \,\leq\, \epsilon
\end{equation}
for all $x$ and all inputs in $\mathbb{X}\times\mathbb{U}$. The network initialisation is explicitly a part of the approximation. Map approximation theory follows from classical universal approximation results for feedforward networks, e.g. by considering the RNN frozen at a single time. In the proof of our main theorem we call on a map approximation result: we state an approximation theorem for feedforward neural networks in Appendix \ref{UAFFappendix}. In general, repeated composition of an approximate map may lead to an accumulation of error.

The problem of \textit{trajectory approximation} seeks for any $\epsilon>0$ a network $\widehat{f}$ and a network initialisation ${h}_0\in\mathbb{H}^{d_h}$ such that,
\begin{equation}
	\mathsf{E}\big[ \,\big\| \widehat{F}_{t}\cdots\widehat{F}_2\cdot\widehat{F}_1({h}_0) \,-\, {F}_{t}\cdots{F}_2\cdot {F}_1(x_0) \big\|^p\, \big]^{1/p} \,\leq\, \epsilon
\end{equation}
for all times in a given interval $t\in\{0,\ldots,T\}$, $T\in\mathbb{N}$ and for all $x_0$ and all inputs in $\mathbb{X}\times\mathbb{U}^T$. Trajectory approximation over known fixed-length intervals is studied in \cite{JinNikiforukGupta1995a,SchaeferZimmermann2007a}. These results are generally unsuitable for trajectories evolving on non-compact domains and over indefinite/infinite horizons. In Appendix \ref{appendixFiniteTime} we prove another finite-horizon universal trajectory approximation result, based on a significantly different network construction and proof than considered in previous works, e.g. \cite{JinNikiforukGupta1995a,SchaeferZimmermann2007a}. This construction complements prior work and may be of general interest. The result in Appendix \ref{appendixFiniteTime} also accommodates more general dynamical system models than previously considered.

Finally, we call a trajectory approximation error bound that holds $\underline{\underline{\mathrm{for~all}~t\in\mathbb{N}}}$ a \textit{time-uniform} trajectory approximation error bound.

\section{Universal Time-Uniform Trajectory Approximation}

The following is the main result of this article.

\begin{theorem}\label{theoremuniversaltrajapprox}
	Consider the map $f:\mathbb{X}\times\mathbb{U}\rightarrow\mathbb{X}$ of (\ref{recursiveRandSysInputDef}) and a RNN $\widehat{f}$ of the form (\ref{RNNform1}), (\ref{RNNform2}), (\ref{RNNform3}), see also (\ref{genericformRNN}). Let $X_0:\Omega\rightarrow\mathbb{X}$ be random, independent, and have finite $p\geq1$ moments. Assume $\{F_t(\omega) = F(\theta^t(\omega))\,|\, t\in\mathbb{N}\}$ is stationary,  and exponentially $p$-contractive, and $\mathsf{E}\left[\|F(x)-x\|^p\right]<\infty$. Then for any $\epsilon>0$ there is a finite $\widehat{f}$ and a random initial $H_0:\Omega\rightarrow\mathbb{H}^{d_h}$ such that,
\begin{equation}
		\mathsf{E}\big[ \,\big\| \widehat{F}_{t}\cdots\widehat{F}_2\cdot\widehat{F}_1(H_0) \,-\, {F}_{t}\cdots{F}_2\cdot {F}_1(X_0) \big\|^p\, \big]^{{1}/{p}}  ~\leq~ \epsilon
\end{equation}
for all $t\in\mathbb{N}$ and the sequence $\{\smash{\widehat{F}_t}(\cdot):=\smash{\widehat{f}(\cdot,U_t)} \,|\, t\in\mathbb{N}\}$ is exponentially $p$-contractive.
\end{theorem}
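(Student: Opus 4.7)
The plan is to reduce the infinite-horizon trajectory error to a geometric series of single-step map approximation errors via a telescoping identity, and then to control each single-step error using a universal approximation theorem for feedforward networks on a large compact ball, paired with a tail estimate supplied by uniform $L^p$ moment bounds on the trajectories. I would specialise to the feedback form (\ref{RNNspecialform})--(\ref{genericspecialformRNN}), so that $\widehat{F}_t(\cdot) = \widehat{f}(\cdot, U_t)$ structurally mirrors $F_t(\cdot) = f(\cdot, U_t)$, and I would take $H_0$ so that its $\widehat{X}$-coordinate agrees almost surely with $X_0$, killing the initialisation error outright.

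First I would set up the telescope. For fixed $t$, define the hybrid chain $Z_s := F_t \cdots F_{s+1}\,\widehat{F}_s \cdots \widehat{F}_1(X_0)$ for $s=0,1,\ldots,t$, so $Z_0$ is the true trajectory and $Z_t$ is the approximate trajectory. Minkowski's inequality yields
\begin{equation*}
\mathsf{E}[\|Z_t - Z_0\|^p]^{1/p} \,\leq\, \sum_{s=1}^{t} \mathsf{E}\big[\,\big\|F_t\cdots F_{s+1}(\widehat{F}_s(\widehat{Y}_{s-1})) - F_t\cdots F_{s+1}(F_s(\widehat{Y}_{s-1}))\big\|^p\,\big]^{1/p},
\end{equation*}
where $\widehat{Y}_{s-1} := \widehat{F}_{s-1}\cdots\widehat{F}_1(X_0)$. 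Conditioning on $(\widehat{Y}_{s-1}, U_s)$ and invoking the exponential $p$-contraction hypothesis (\ref{stabilitycond}) gives a bound of the form $C^{1/p}\lambda^{(t-s)/p}\,\delta_s^{1/p}$ per summand, with single-step error $\delta_s := \mathsf{E}[\|\widehat{f}(\widehat{Y}_{s-1}, U_s) - f(\widehat{Y}_{s-1}, U_s)\|^p]$. If $\sup_s \delta_s \leq \delta$, the geometric series bounds the total error uniformly in $t$ by $C^{1/p}\delta^{1/p}/(1-\lambda^{1/p})$, which is below $\epsilon$ once $\delta$ is small.

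The main obstacle is uniform control of $\delta_s$, since $\widehat{Y}_{s-1}$ and $U_s$ live on non-compact spaces while universal map approximation is only uniform on compacta. I would first establish the uniform moment bound $\sup_s \mathsf{E}[\|\widehat{Y}_s\|^p] < \infty$, paralleling the analogous bound for the true $X_t$ that follows from (\ref{stabilitycond}), the finite $p$-moment of $\|F(x)-x\|$, and the finite $p$-moments of $X_0$. Combined with finite $p$-moments of $U_s$ (a consequence of stationarity and $\mathsf{E}[\|F(x)-x\|^p]<\infty$), a Markov truncation to a compact ball $B_R \subset \mathbb{X}\times\mathbb{U}$ reduces $\delta_s$ to the sum of a uniform approximation error of $\widehat{f}$ to $f$ on $B_R$ (handled by the feedforward universal approximation result of Appendix \ref{UAFFappendix}, since $f$ is Lipschitz in its first argument) and a tail controlled by the Lipschitz constants of $f$ and $\widehat{f}$ against the Markov tail probability; both shrink by choosing $R$ large and the network appropriately.

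To close the circular dependence between the uniform moment bound on $\widehat{Y}_s$ and the construction of $\widehat{f}$, I would build $\widehat{f}$ so that its Lipschitz constant in $x$ matches, up to an arbitrarily small slack, that of $f$: by capping the operator norms of the affine maps $\tau_l,\varphi$ in (\ref{RNNspecialform}) and using the non-expansiveness of the ReLU activation $\sigma$, the sequence $\{\widehat{F}_t\}$ becomes exponentially $p$-contractive by construction (with constants close to those for $\{F_t\}$, possibly with $\lambda$ slightly enlarged), independently of and prior to the moment argument. This simultaneously delivers the time-uniform error bound and the advertised exponential $p$-contractivity of $\{\widehat{F}_t\}$.
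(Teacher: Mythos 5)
Your overall skeleton---telescoping the trajectory error into single-step map errors, summing the geometric series supplied by exponential $p$-contractivity, and enforcing contractivity of $\{\widehat F_t\}$ by a Lipschitz-constrained approximation---is the same as the paper's. (You telescope with the true maps on the outside and evaluate the single-step error along the approximate trajectory; the paper uses the mirror image, propagating through $\{\widehat F_t\}$ and evaluating at the true trajectory. Either orientation is workable in principle.)

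The gap is in your mechanism for making the single-step errors $\delta_s$ small \emph{uniformly in $s$}. You propose a truncation to a ball $B_R$, sup-norm approximation on $B_R$, and a tail ``controlled by the Lipschitz constants of $f$ and $\widehat f$ against the Markov tail probability.'' Two things break. First, $f$ is only assumed Lipschitz in its $\mathbb{X}$-argument, not in $u$, so the tail contribution $\mathsf{E}\big[\|\widehat f(\widehat Y_{s-1},U_s)-f(\widehat Y_{s-1},U_s)\|^p\mathbf{1}_{B_R^c}\big]$ is not controlled by Lipschitz constants and must be handled as a genuine tail integral. Second, and more fundamentally, $\sup_s\mathsf{E}[\|\widehat Y_s\|^p]<\infty$ gives $\sup_s\mathsf{P}(B_R^c)\rightarrow 0$ but does \emph{not} give uniform integrability of the $p$-th powers, so you cannot conclude that this tail integral is small uniformly in $s$ without strengthening the hypotheses (e.g.\ to uniformly bounded $(p+\delta)$-moments); moreover the law of $\widehat Y_{s-1}$ depends on the network being constructed, making the truncation circular. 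The paper avoids all of this with one idea your argument is missing: couple to the stationary regime. Extend $\{F_t\}$ to $t\in\mathbb{Z}$, let $\mu=\mathrm{Law}(X_\infty)$ from Lemma \ref{eltonlemma}, and compare the trajectory to a copy started at $X'\sim\mu$. Exponential $p$-contractivity turns the discrepancy into an exponentially decaying transient $C^\dagger(\lambda^\dagger)^s\|x_0-X'\|_p$, while stationarity makes the remaining single-step error the \emph{single time-independent} quantity $\|\widehat f(X,U)-f(X,U)\|_p$ with $(X,U)$ at the invariant law---which one application of the $L^p$ (not sup-norm) universal approximation result, Lemma \ref{lemmaUALp}, drives below any threshold, with no truncation and no uniformity-over-$s$ issue. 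That coupling is the step your proof needs and does not have.
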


The conditions of interest are stationarity and exponential contractivity of $\{F_t(\cdot)=f(\cdot,U_t) \,|\, t\in\mathbb{N}\}$. The induced process $\{X_t\,|\, t\in\mathbb{N}\}$ is not necessarily Markov, but it is asymptotically stationary under the hypotheses of the theorem, via Lemma \ref{eltonlemma}.

\begin{proof}
Consider a RNN of the form (\ref{RNNspecialform}); also written as (\ref{genericspecialformRNN}). This is a (very) special case of the more general (\ref{RNNform1}), (\ref{RNNform2}), (\ref{RNNform3}). Consider the error,
\begin{equation}
		 \widehat{X}_t^{h_{0}} \,-\, X_t^{x_0}  \,=\, \widehat{F}_{t}\cdots\widehat{F}_2\cdot\widehat{F}_1({h_0}) \,-\, {F}_{t}\cdots{F}_2\cdot {F}_1(x_0)  
\end{equation}
for any ${x}_0\in\mathbb{X}$ and with $h_0=h_{L,0}\in\mathbb{X}$ in the special case (\ref{RNNspecialform}). Let $h_{L,0}=x_0$. Then we can expand,
\begin{align}
		 \widehat{X}_t^{{x_0}} \,-\, X_t^{x_0}  \,&=\, \sum_{s=1}^{t} {F}_{t}\cdots{F}_{s+1} \cdot \widehat{F}_{s}\cdot\big(\widehat{F}_{s-1}\cdots\left({x}_0\right)\big) \,-\, {F}_{t}\cdots{F}_{s+1} \cdot {F}_{s}\cdot\big(\widehat{F}_{s-1}\cdots\left({x}_0\right)\big) \nonumber \\
		 &=\, \sum_{s=1}^{t} \widehat{F}_{t}\cdots \widehat{F}_{s+1} \cdot \widehat{F}_{s}\cdot\big({F}_{s-1}\cdots\left({x}_0\right)\big) \,-\, \widehat{F}_{t}\cdots \widehat{F}_{s+1} \cdot {F}_{s}\cdot\big({F}_{s-1}\cdots\left({x}_0\right)\big)
\end{align}
This is a telescopic expansion (easy to verify, e.g. with $t\in\{3,4\}$). Writing $X_{s-1}^{x_0}={F}_{s-1}\cdots({x}_0)$, we can take norms $\|\cdot\|_p:=\mathsf{E}[\|\cdot\|^p]^{1/p}$,
\begin{align}
	\big\| \widehat{X}_t^{{x}_0} - X_t^{x_0} \big\|_p &=\, \big\|  \sum_{s=1}^{t} \widehat{F}_{t}\cdots \widehat{F}_{s+1} \cdot \widehat{F}_{s}\big(X_{s-1}^{x_0}\big) - \widehat{F}_{t}\cdots \widehat{F}_{s+1} \cdot {F}_{s}\big(X_{s-1}^{x_0} \big) \big\|_p \nonumber\\
		&\leq\, \sum_{s=1}^{t}   \,\big\| \widehat{F}_{t}\cdots \widehat{F}_{s+1} \cdot \widehat{F}_{s}\big(X_{s-1}^{x_0}\big) - \widehat{F}_{t}\cdots \widehat{F}_{s+1} \cdot {F}_{s}\big(X_{s-1}^{x_0} \big) \big\|_p \label{normederrordecomposition}
\end{align}
Assume that the sequence of random maps $\{\widehat{F}_t \,|\, t\in\mathbb{N}\}$ is exponentially $p$-contractive. We will verify this assumption later. With this assumption we can write the last line as,
\begin{align}
		\big\| \widehat{X}_t^{{x_0}} - X_t^{x_0} \big\|_p 
		\,&\leq\,  \big\| \widehat{F}_{t}\big(X_{t-1}^{x_0}\big) - {F}_{t}\big(X_{t-1}^{x_0} \big) \big\|_p   \,+\, C\sum_{s=1}^{t-1} \lambda^{(t-s-1)} \big\| \widehat{F}_{s}\big(X_{s-1}^{x_0}\big) - {F}_{s}\big(X_{s-1}^{x_0} \big) \big\|_p \label{sumsinglestep}
\end{align}
This is a sum of single-step approximation errors (arising as a result of the RNN map approximation) weighted by the stability index due to the exponential contractiveness of $\{\smash{\widehat{F}_t} \,|\, t\in\mathbb{N}\}$. 

Let $X'\sim\mu'\in\mathcal{P}(\mathbb{X})$ be some random variable. We will consider $\| \widehat{F}_{s}(X_{s-1}^{x_0}) - {F}_{s}(X_{s-1}^{x_0} ) \|_p$ for any $s\in\{1,\ldots,t\}$. Applying the triangle inequality twice we get,
\begin{align}
		\big\| \widehat{F}_{s}\big(X_{s-1}^{x_0}\big) \,-\, {F}_{s}\big(X_{s-1}^{x_0} \big) \big\|_p \,&\leq\, \big\| \widehat{F}_{s}\big(X_{s-1}^{x_0}\big) \,-\, \widehat{F}_{s}\big(X_{s-1}^{X'} \big) \big\|_p  + \big\| \widehat{F}_{s}\big(X_{s-1}^{X'}\big) \,-\, {F}_{s}\big(X_{s-1}^{x_0} \big) \big\|_p \nonumber \\
		\,&\leq\, \big\| \widehat{F}_{s}\big(X_{s-1}^{x_0}\big) \,-\, \widehat{F}_{s}\big(X_{s-1}^{X'} \big) \big\|_p + \big\| \widehat{F}_{s}\big(X_{s-1}^{X'}\big) \,-\, {F}_{s}\big(X_{s-1}^{X'} \big) \big\|_p \nonumber \\
		& \qquad + \big\| {F}_{s}\big(X_{s-1}^{X'}\big) \,-\, {F}_{s}\big(X_{s-1}^{x_0} \big) \big\|_p   \label{singlestepdecomposition}
\end{align}
Since $\{{F}_t \,|\, t\in\mathbb{N}\}$ and $\{\widehat{F}_t \,|\, t\in\mathbb{N}\}$ are exponentially $p$-contractive (the latter to be verified) we have,
\begin{align}
	\big\| \widehat{F}_{s}\big(X_{s-1}^{x_0}\big) - \widehat{F}_{s}\big(X_{s-1}^{X'} \big) \big\|_p \,+\,  \big\| {F}_{s}\big(X_{s-1}^{X'}\big) -  {F}_{s}\big(X_{s-1}^{x_0} \big) \big\|_p ~ \leq~ C^\dagger\, \lambda^{\dagger^{s}}\,  \big\| x_0 - X' \big\|_p
\end{align}
for all $s\in\{1,\ldots,t\}$ and some finite $C^\dagger>0$, $\lambda^\dagger\in(0,1)$. The term $\| x_0 - X' \|_p$ is finite for all $x_0\in\mathbb{X}$, including those random initialisations with finite $p$-th moments.

Now extend the sequence $\{F_t \,|\, t\in\mathbb{N}\}$ to $\{F_t \,|\, t\in\mathbb{Z}\}$. Let 
\begin{equation}
	X_\infty~=~ \lim_{t\rightarrow\infty} ~F_{0}\cdot F_{-1}  \cdots  F_{-t}(x)
\end{equation}
almost surely from Lemma \ref{eltonlemma} such that $\{X_t^{\mu}\,|\,t\in\mathbb{N}\}$ with $\mu:=\mathrm{Law}(X_{\infty})\in\mathcal{P}(\mathbb{X})$ is stationary in $\mathbb{X}$. 

Setting $\mu'=\mu$, note that $\smash{X_{s}^{X'}}\sim\mu$, $\forall s\in\mathbb{N}$ by stationarity. We then have,
\begin{align}
		\big\| \widehat{X}_t^{{x_0}} - X_t^{x_0} \big\|_p 
		 \,\leq\, \left(1+\frac{C}{1-\lambda}\right) \left( C^\dagger\, \lambda^{\dagger^{t}}\,  \big\| x_0 - X \big\|_p \,+\, \big\| \widehat{f}\big(X,U\big) - {f}\big(X,U \big) \big\|_p \right)
\end{align}
where $X\sim\mu$, and $U:\Omega\rightarrow\mathbb{U}$ is distributed by the invariant law of the stationary $\{U_t(\omega)\,|\, t\in\mathbb{N}\}$.

\textbf{Map Approximation}: If we drop the recursion on $t\in\mathbb{N}$ in (\ref{genericspecialformRNN}), then we may view (\ref{RNNspecialform}) as a feedforward neural network, cf. Appendix \ref{UAFFappendix}, with input $(\widehat{X}_{t-1}, U_{t})\in\mathbb{X}\times\mathbb{U}$. Note the sum of the two affine maps $\tau_1$ and $\varphi$ in (\ref{RNNspecialform}) is just another affine map on $(\widehat{X}_{t-1}, U_{t})\in\mathbb{X}\times\mathbb{U}$. This interpretation is all that is needed for the desired map approximation. In this notation $(\widehat{X}_{t-1}, U_{t}) = (X,U)$. Applying Lemma \ref{lemmaUALp} in Appendix \ref{UAFFappendix} it follows that for any $\overline{\epsilon}>0$ there exists a finite neural network $\widehat{f}$ such that,
\begin{equation}
	 \left\| \,\widehat{f}(X,U) - f(X,U)\,  \right\|_p \,\leq\, \overline{\epsilon} \label{mapapproxtheorem1}
\end{equation}
with the added property that $\mathfrak{L}(\widehat{F})(\omega) \leq \mathfrak{L}({F})(\omega)$, $\forall\omega\in\Omega$, where $\widehat{F}(\cdot):=\widehat{f}(\cdot,U)$, $F(\cdot):=f(\cdot,U)$. This completes a map approximation result. 

Because $\overline{\epsilon}$ can be fixed arbitrary, we may solve, say, $\epsilon \geq (\overline{\epsilon} + C^\dagger\, \| x_0 - X'_0 \|_p)\,(1+\frac{C}{1-\lambda})$ for $\overline{\epsilon}>0$ and we obtain the desired trajectory approximation result for any initialisation $x_0\in\mathbb{X}$, including random initialisations with finite $p$-th moments. We note the accuracy of map approximation needed to achieve the desired trajectory approximation may depend on the initialisation when it is not at stationarity; however, the error from any non-stationary starting point is forgotten exponentially fast $C^\dagger \lambda^{\dagger^{t}} \smash{ \big\| x_0 - X' \big\|_p}$.

It remains to establish the sequence of maps $\{\widehat{F}_t \,|\, t\in\mathbb{N}\}$ is exponentially $p$-contractive. However, this follows directly from the second part of Lemma \ref{lemmaUALp} in Appendix \ref{UAFFappendix} as $\mathfrak{L}(\widehat{F})(\omega) \leq \mathfrak{L}({F})(\omega)$, $\forall\omega\in\Omega$ implies $\{\smash{\widehat{F}_t} \,|\, t\in\mathbb{N}\}$ is exponentially $p$-contractive whenever $\{{F}_t \,|\, t\in\mathbb{N}\}$ is exponentially $p$-contractive.

This completes the proof of Theorem \ref{theoremuniversaltrajapprox}.
\end{proof}

\subsection{Example: Learning to Generate Fractals from Short Samples} \label{sec:fernPractical}

We return to the example and setup in Section \ref{barnsleysfernexample1}. Let $\mathbb{U}=\{1,\ldots,K\}$. A random variable $U:\Omega\rightarrow\mathbb{U}$ is defined by $U:=\omega_0$ where $\omega_0$ is the first coordinate of $\omega\in\Omega$. The probability is $U_{\#}\mathsf{P}=\mathsf{p}\in\mathcal{P}(\mathbb{U})$. Thus, $\mathsf{P}(\{\omega:U(\omega)=i\}=\mathsf{p}(i)=p_i$. The process $\{U_t(\omega) = U(\theta^t(\omega))\,|\, t\in\mathbb{N}\}$ is independent and identically distributed and acts as a switching input  signal for the switched dynamical system. The induced process from the setup in Section \ref{barnsleysfernexample1} can now be written as,
\begin{equation}
	X^{x_0}_t(\omega) \,=\, f(X^{x_0},U_t(\omega)) \,=\, \left\{\begin{array}{lcl} G_1(X^{x_0}_{t-1}) & & \mathrm{if}~U_t(\omega)=1 \\
															 & \vdots&  \\
															G_K(X^{x_0}_{t-1}) & & \mathrm{if}~U_t(\omega)=K \end{array}\right. \label{ifsU}
\end{equation}
in line with (\ref{recursiveRandSysInputDef}) where $f:\mathbb{X}\times\mathbb{U}\rightarrow\mathbb{X}$ is non-random, time-invariant, measurable, and Lipschitz in $\mathbb{X}$.

We now consider a simplified version of Barnsley's fern. Let $\mathbb{X}=\mathbb{R}^2$, $x_0=0$, and $K=2$ and consider the affine transformations,
\begin{align}
	G_1(x) \,=\, \begin{bmatrix} 0.40& -0.3733\\ 0.060& 0.60\end{bmatrix}x+{\begin{bmatrix} 0.3533\\0.00\end{bmatrix}},\quad G_2(x) \,=\, \begin{bmatrix} -0.80& -0.1867\\ 0.1371& 0.80\end{bmatrix}x+{\begin{bmatrix} 1.10\\0.10\end{bmatrix}} \label{simpleFern}
\end{align}
with $\mathsf{p}=(0.2993, 0.7007)$. The induced trajectory of (\ref{ifsU}), (\ref{simpleFern}) plots a simplified fern \cite{DiaconisFreedman1999a}. 

Consider $N\in\mathbb{N}$ independent trajectories $\{\,(X^{x_0}_t,U_t)_n \,|\, t\in\{1,\ldots,T \},\,n\in\{1,\ldots,N\}\}$. We draw one set of samples for training a RNN; i.e. learning the network parameters (e.g. via back propagation with a mean square error cost \cite{PascanuMikolovBengio2013a}). The initial state is not part of the data in this example. We draw another set of samples for testing the RNN and computing mean error performance. We set $T_\mathrm{train}=50$, $N_\mathrm{train}=1000$, and $T_\mathrm{test}=10000$, $N_\mathrm{test}=2000$. We consider applications of the approximation over significantly longer time intervals than used in training. We train a simple network like (\ref{RNNspecialform}) with a single hidden layer of $6$ scalar activation functions $\sigma(\cdot):=\max\{0,\cdot\}$. See Figure \ref{fig:simpleFern}.

\begin{figure}[!ht]
    \centering
    \includegraphics[width=0.99\textwidth, height=0.145\textheight]{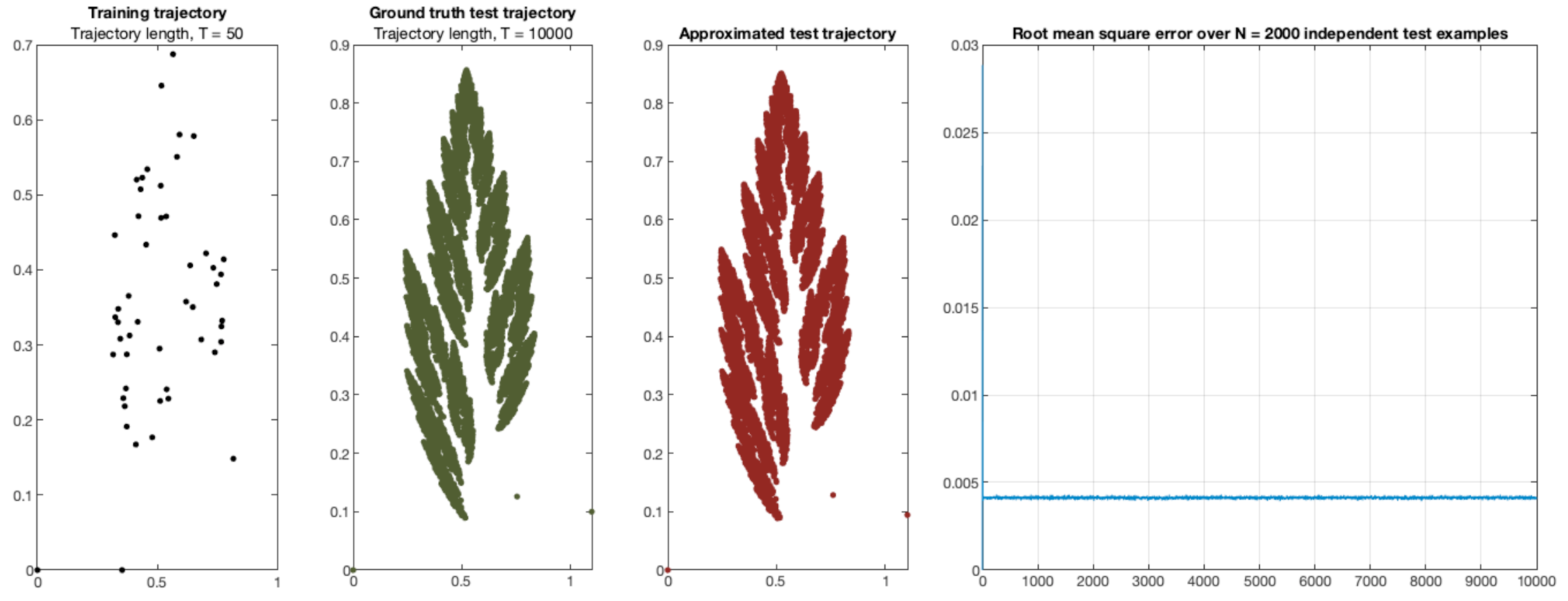}
    \caption{An example training, ground truth test and approximated test trajectory is shown; along with the root mean square error on $2000$ test examples, illustrating the time-uniform error control.}
    \label{fig:simpleFern}
\end{figure}

The approximation error introduced at each iteration does not accumulate, nor lead to increasing phase space distortion. This example just serves to illustrate a simple application of Theorem \ref{theoremuniversaltrajapprox}; i.e. we do not consider the training data or ergodic properties of this switched affine system here, nor seek state-of-the-art identification methods. We show later that even in very simple systems a small approximation error can grow and accumulate unbounded when the hypotheses of Theorem \ref{theoremuniversaltrajapprox} are violated.

\subsection{Generalisations to Asymptotically Stationary Sequences of Maps}

Firstly, we emphasise that Theorem \ref{theoremuniversaltrajapprox} is not restricted to Markov processes on $\mathbb{X}$. Asking just for stationarity of the random maps is a more general condition. We will show later a simple application of this result in optimal filtering wherein this additional generality is needed. We can say more. 

For the discussion here, lets say a process $\{U_t\,|\,t\in\mathbb{N}\}$ is \textit{asymptotically stationary} if the marginal law of $U_t$ converges toward an invariant measure, and if when $U_0$ is distributed according to this invariant measure, then $U_t$ is stationary. In at least some such cases the conclusions of Theorem \ref{theoremuniversaltrajapprox} remain true. We detail one specific situation here. From (\ref{recursiveRandSysInputDef}), we consider the special case
\begin{equation}
	F_{t}(x) \,=\, f(x,U_{t}) \,:=\, \mathbf{A}\,x \,+\, \mathbf{B}\,U_t  \label{recursiveAffineRandSysInputDef}
\end{equation}
where $\mathbf{A}$ and $\mathbf{B}$ are real finite linear maps $\mathbb{X}\mapsto\mathbb{X}$ and $\mathbb{U}\mapsto\mathbb{X}$. Suppose $U_t$ is generated by a linear state-space model of the form,
\begin{equation}
	U_t = \mathbf{G}\,U_{t-1} \,+\, U'_t \label{Uasymptoticstationary}
\end{equation}
for a linear $\mathbf{G}:\mathbb{U}\rightarrow\mathbb{U}$. Suppose $U'_t:\Omega\rightarrow\mathbb{U}$ is a stationary process independent of $U_0:\Omega\rightarrow\mathbb{U}$, and both have finite $p\geq1$ moments. For a general starting point, the process $\{U_t\,|\,t\in\mathbb{N}\}$ is not stationary. 

\begin{corollary}\label{corollarylinear}
	Consider the map $f:\mathbb{X}\times\mathbb{U}\rightarrow\mathbb{X}$ of (\ref{recursiveAffineRandSysInputDef}) and a RNN $\widehat{f}$ of the special form (\ref{RNNspecialform}), see also (\ref{genericspecialformRNN}). Let $X_0:\Omega\rightarrow\mathbb{X}$ be random, independent, and have finite $p\geq1$ moments. Assume $\{U_t\,|\, t\in\mathbb{N}\}$ is generated by \eqref{Uasymptoticstationary}. Assume the spectral radius of both $\mathbf{A}$ and $\mathbf{G}$ is strictly less than one. Then for any $\epsilon>0$ there is a finite $\widehat{f}$ and a random initialisation $H_0:\Omega\rightarrow\mathbb{H}^{d_h}$ such that,
\begin{equation}
		\mathsf{E}\big[ \,\big\| \widehat{F}_{t}\cdots\widehat{F}_2\cdot\widehat{F}_1(H_0) \,-\, {F}_{t}\cdots{F}_2\cdot {F}_1(X_0) \big\|^p\, \big]^{{1}/{p}}  ~\leq~ \epsilon, \qquad \forall t\in\mathbb{N}
\end{equation}
\end{corollary}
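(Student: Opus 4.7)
The strategy is to follow the template of Theorem \ref{theoremuniversaltrajapprox} verbatim, showing that the key ingredient---uniform control of the single-step map-approximation error across all $s\in\mathbb{N}$---can be recovered from the two spectral-radius conditions in place of full stationarity of $\{F_t\}$. First, $\{F_t(\cdot)=\mathbf{A}(\cdot)+\mathbf{B}U_t\}$ is exponentially $p$-contractive: the linear cancellation $F_t\cdots F_{s+1}(x) - F_t\cdots F_{s+1}(y) = \mathbf{A}^{t-s}(x-y)$ combined with $\|\mathbf{A}^n\|\leq C\lambda^{n}$ ($\lambda\in(0,1)$, by Gelfand's formula; equivalently, one works in a norm adapted to $\mathbf{A}$ in which $\|\mathbf{A}\|<1$) delivers (\ref{stabilitycond}). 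Second, solving (\ref{Uasymptoticstationary}) as $U_t = \mathbf{G}^tU_0 + \sum_{s=1}^t \mathbf{G}^{t-s}U'_s$ and combining $\rho(\mathbf{G})<1$ with finite $p$-moments of $U_0$ and the stationary $L^p$ sequence $\{U'_t\}$ yields $\sup_t \|U_t\|_p<\infty$; an identical estimate for $X_t^{x^*} = \mathbf{A}^tx^* + \sum_{s=1}^t \mathbf{A}^{t-s}\mathbf{B}U_s$ gives $\sup_t \|X_t^{x^*}\|_p<\infty$ for any deterministic reference $x^*\in\mathbb{X}$.

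With these ingredients I would rerun the telescoping decomposition (\ref{normederrordecomposition})--(\ref{sumsinglestep}) of the Theorem's proof verbatim, arriving at a geometrically weighted sum of single-step errors. The reference variable $X'\sim\mu$ used in the Theorem proof (with $\mu$ the invariant law available from stationarity) is replaced by the deterministic $X'=x^*$; the first and third summands of (\ref{singlestepdecomposition}) still decay as $C^\dagger(\lambda^\dagger)^{s}\|X_0-x^*\|_p$ by exponential $p$-contractivity of $\{F_t\}$ and of $\{\widehat F_t\}$, the latter secured as in the Theorem by the clause $\mathfrak{L}(\widehat F)\leq\mathfrak{L}(F)$ of Lemma \ref{lemmaUALp}. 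The only substantive change is the middle term $\|\widehat f(X_{s-1}^{x^*},U_s)-f(X_{s-1}^{x^*},U_s)\|_p$, which must now be controlled uniformly over the time-varying family $\{\mathrm{Law}(X_{s-1}^{x^*},U_s)\}_{s\in\mathbb{N}}$. Two equivalent routes close this gap: (a) since $f$ is linear, the network (\ref{RNNspecialform}) can represent $f$ exactly using the ReLU identity $v=\sigma(v)-\sigma(-v)$ componentwise, making the error vanish; or (b) the uniform $p$-moment bounds just established render this family tight, so Lemma \ref{lemmaUALp} applied on a compact $K\subset\mathbb{X}\times\mathbb{U}$ carrying most of the mass of every member of the family furnishes a single $\widehat f$ with $L^p$-approximation error uniformly small in $s$.

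The conclusion follows as in the Theorem: the uniform single-step bound $\overline\epsilon$ compounds to $\|\widehat X_t^{H_0}-X_t^{X_0}\|_p \leq (1+C/(1-\lambda))(\overline\epsilon + C^\dagger(\lambda^\dagger)^{t}\|X_0-x^*\|_p)$, and choosing $\overline\epsilon$ small together with $H_0=X_0$ (which has finite $p$-moments by assumption and is a valid random initial state) forces the total error below $\epsilon$ for all $t\in\mathbb{N}$. The main obstacle is precisely the uniform-$L^p$ approximation step for a non-stationary family of laws; this is exactly where the spectral condition on $\mathbf{G}$ is used, either circumvented by linearity (route (a)) or exploited through the tightness it enforces on $\{\mathrm{Law}(X_{s-1}^{x^*},U_s)\}_{s\in\mathbb{N}}$ (route (b)).
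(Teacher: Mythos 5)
Your proposal is correct, but it takes a genuinely different route from the paper. The paper does not rerun the telescoping argument at all: it stacks the pair $(X_t,U_t)$ into the augmented linear system
\begin{equation*}
	\left[\begin{array}{c} X_t \\ U_t  \end{array}\right] \,=\, \left[\begin{array}{cc} \mathbf{A} & \mathbf{B}\mathbf{G} \\ \mathbf{0} & \mathbf{G}  \end{array}\right] \left[\begin{array}{c} X_{t-1} \\ U_{t-1}  \end{array}\right]  \,+\, \left[\begin{array}{cc} \mathbf{B} & \mathbf{0} \\ \mathbf{0} & \mathbf{Id}  \end{array}\right]  U'_t ,
\end{equation*}
observes that the block-triangular transition matrix has spectral radius $<1$ and that the driving input of the stacked system is now the \emph{stationary} process $U'_t$, so Theorem \ref{theoremuniversaltrajapprox} applies verbatim to the stacked system; a sub-RNN of the resulting network (feeding in $U_t$ and zeroing the feedback on $\widehat{U}_t$) then approximates the original map. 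That reduction is shorter, never confronts a non-stationary family of laws, and is the template for the advertised generalisation to asymptotically stationary inputs. Your direct approach buys a self-contained argument and makes explicit where non-stationarity enters, and your route (a) -- exact ReLU representation of the affine map via $v=\sigma(v)-\sigma(-v)$ -- closes the corollary completely (indeed with zero error), so the proof stands.

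One caveat on your route (b): tightness of $\{\mathrm{Law}(X_{s-1}^{x^*},U_s)\}_{s\in\mathbb{N}}$ plus approximation on a compact set ``carrying most of the mass'' is not by itself enough for a \emph{uniform-in-$s$} $L^p$ bound; you also need uniform integrability of the $p$-th powers to control the contribution off the compact set (a uniform $L^p$ bound alone does not give this). It does hold here, because $\mathbf{G}^sU_0\to0$ and $\sum_{i}\mathbf{G}^{s-i}U'_i$ is equal in law to an $L^p$-convergent backward sum (and similarly for $X_s^{x^*}$), but that step should be stated rather than inferred from tightness. Since route (a) already suffices for the affine $f$ of (\ref{recursiveAffineRandSysInputDef}), this does not affect the correctness of your proposal, only its claimed second path.
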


Proof is given in Appendix \ref{corollarylinearproof}. More general settings are possible; i.e. accommodating some notion of asymptotic stationarity of the random maps $\{\widehat{F}_t \,|\, t\in\mathbb{N}\}$. We leave this generalisation to the reader.

\section{Discussion}

The result in this article is closely related to an application of the Alekseev-Gr\"obner formula \cite{Iserles2009a} in discrete-time, applied to a process that is in some sense stable. Classical feedforward universal approximations then provide control over the local error in the Alekseev-Gr\"obner formulation. This interpretation is exactly that employed in \cite{HansonRaginsky2020a} for approximation of stable ordinary differential equations with bounded inputs using continuous-time RNNs; the Alekseev-Gr\"obner formula being key in \cite{HansonRaginsky2020a}. This trick is not uncommon. For example, similar steps as given are used to prove the time-uniform boundedness of approximation errors in sequential Monte Carlo methods \cite{DelMoralGuionnet2001a}. Viewing the RNN as a perturbed version of some ground truth dynamical system, the analysis herein reinforces the generally applied notion that exponential stability is robust, i.e. it is respected by small perturbations \cite{Sastry1999a,Iserles2009a}. Exponential ergodicity may also be preserved by small perturbations of Markov chain transition functions \cite{RobertsRosenthalSchwartz1998a}. 

One key trick used in the proof of Theorem \ref{theoremuniversaltrajapprox} is the telescopic expansion of the error. When combined with a stability property and control over the local, one-step, map approximation error, the result follows easily. Relevant here is that this approach, unlike Alekseev-Gr\"obner, works in a very general setting; e.g. with discrete-time random dynamical systems with random inputs. A second key insight is that the (average) contractive property that ensures the telescopic sum is convergent is the same property that leads to (asymptotic) stationarity in $\mathbb{X}$ and permits control over all the single-step errors.

This work adds to existing related literature. We consider random dynamical systems on non-compact domains with unbounded random inputs. The modelling used here is amenable to practice in many applications, e.g. time-series modelling. Moreover, the network constructions employed are based on practical deep RNNs with simple architectures; and the proof is straightforward so as to facilitate a practical understanding of its workings. We comment on the problem of finite-horizon trajectory approximation later. Prior related work on universal \textit{time-uniform} trajectory approximation, e.g. \cite{Sandberg1991a,Matthews1993a,MatthewsMoschytz1994a,GrigoryevaOrtega2018b,GrigoryevaOrtega2018a,HansonRaginsky2020a,HartHookDawes2020a,GononOrtega2021a}, is focused on deterministic dynamical systems on compact domains with bounded inputs. The article \cite{GononOrtega2020a} is an exception and accommodates unbounded stochastic inputs in the context of universal approximation with echo state networks (see \cite{Jaeger2001a,MaassNatschlaegerMarkram2002a,Lukosevicius2012a,YildizJaegerKiebel2012a}) which are beyond the focus of this work. Also, the conditions in \cite{GononOrtega2020a} are somewhat opaque when applied to the discrete-time random dynamical systems considered herein. For example, we know that certain average contraction conditions, viz. Lemma \ref{eltonlemma}, are loosely necessary to transfer stationarity from the random input sequence to the induced random process. Moreover, here, the induced process is only asymptotically stationary; and the special case in Corollary \ref{corollarylinear} and the proof of Theorem \ref{theoremuniversaltrajapprox} further imply a generalisation to only asymptotically stationary inputs. The conditions proposed herein thus seem to be the natural conditions to frame this kind of result. The derivations given here seem more direct than in related work.

Different network structures, in continuous-time, with various universal approximation capabilities including in some cases time-uniform approximation are given in \cite{MaassSontag2000a,MaassNatschlaegerMarkram2002a,MaassJoshiSontag2007a}. These networks differ from the deep RNNs considered herein; and the modelling, conditions precedent, and approximation capability studied in those articles differs from the discrete-time dynamical systems-based, state-space formulation, presented in this article. We do not address these models further here.

The result presented here has practical relevance in RNN-based trajectory approximation; e.g. in training, design and applications. For example, the result suggests that RNN training sequence lengths may be short in some cases while applications in test time may run indefinitely, with no unbounded accumulation of approximation error. The sparse feedback structure in the deep RNN architecture in the proof of Theorem \ref{theoremuniversaltrajapprox} offers insight into practical network constructions and may simplify design. For example, the notion of memory seems less critical in controlling long-time approximation errors in the problems mainly considered in this work. In applying RNNs, the main result implies a means for testing certain properties of the underlying process approximated; e.g. if performance degrades continually over long time horizons, then the underlying process may violate the considered hypotheses.

In discrete-time, the articles \cite{JinNikiforukGupta1995a,SchaeferZimmermann2007a} consider trajectory approximation on fixed finite time intervals. Again, the basic idea in \cite{JinNikiforukGupta1995a,SchaeferZimmermann2007a} is that the initial map approximation error is fixed small enough so that the inevitable accumulation of error over the finite interval remains below a desired threshold. Conversely, in the proof of Theorem \ref{theoremFiniteTimeTrajApprox} in Appendix \ref{appendixFiniteTime}, we construct a network for finite-horizon trajectory approximation for very general (random) dynamical systems with inputs. The network we construct has a feedback structure that is not aimed at recursion in the state variables, but rather at memorising the history of inputs and approximating different composed maps at each time step. The error in this formulation does not necessarily accumulate in time; but rather the network size grows significantly as a function of the time interval. This is a very different approach than considered in \cite{JinNikiforukGupta1995a,SchaeferZimmermann2007a} and is of interest on its own. Returning to the practice of network design, it may be of interest to consider the advantages of feedback structures between those based on state recursions and those based on memorisation. For example, it may be possible to improve performance in some systems when only training data is available and the stability/time-delay/dependency nature of the model is unknown. It may simplify modelling of systems that are marginally average contractive. It may reduce overall network size.

We apply an idealised universal approximation theorem in the space of Lipschitz maps \cite{Eckstein2020a,NeumayerGoujonBohraEtAl2022a}. In practice, learning Lipschitz maps with accurate Lipschitz constants or even constrained Lipschitz constants is hard \cite{HusterChiangChadha2018a,AnilLucasGrosse2019a,CohenHusterCohen2019a} and even estimating the Lipschitz constant of a neural network is generally intractable, see \cite{VirmauxScaman2018a}. Constraining or estimating the Lipschitz constant of a neural network, typically feedforward, has many applications (see \cite{HusterChiangChadha2018a,VirmauxScaman2018a,AnilLucasGrosse2019a,CohenHusterCohen2019a}), and in some sense we provide another.

Finally, in this article we consider a general setting amenable to applications. We consider random dynamical systems with unbounded random inputs on non-compact domains. We provide a stepped-through pedagogical proof that is simple and highlights where problems in long-time approximation may arise. There are natural extensions. There may be relaxations or variations of interest on the type of average contraction condition; noting the importance of Lemma \ref{eltonlemma} is in the establishment of some kind of asymptotic stationarity in the induced process. An average exponential rate of contraction is seemingly necessary, so that the telescopic sum in the proof is convergent. A slower average rate will not yield a convergent sum; which is consistent with the general stability theory of perturbed dynamical systems, see \cite{Sastry1999a,Iserles2009a}, and approximation results in \cite{HansonRaginsky2020a}. We only consider the rectified linear activation function in this article. Generalisations of the activation function as in \cite{HornikStinchcombeWhite1989a,LeshnoLinPinkusEtAl1993a,Eckstein2020a,KidgerLyons2020a} are a direction for possible extension. Different types of nonlinear system may also be considered, e.g. \cite{HansonRaginskySontag2021a}.

\section{Additional Illustrative Examples}

\subsection{Universal Time-Uniform Approximation of the Kalman Filter}

Let $\mathbb{Z}\subseteq\mathbb{R}^{d_z}$. We consider the following discrete-time state-space model,
\begin{align}
	Z_{t}^{\mu_0} \,&=\, \mathbf{G}\,Z_{t-1}^{\mu_0} \,+\, E_t \label{sysLintheory} \\
	U_{t} \,&=\, \mathbf{H}\,Z_{t}^{\mu_0} \,+\, D_t \label{measLintheory}
\end{align}
on $\mathbb{Z}\times\mathbb{U}$ where $\mathbf{G}$ and $\mathbf{H}$ are real matrices, and where\footnote{We denote a Gaussian distribution by $\mathcal{N}(\cdot,\cdot)$ with mean given by the first parameter and covariance by the second.} $Z_{0}\sim\mathcal{N}(\overline{Z}_0,\mathbf{C}_0)=:\mu_0$ where $\mathbf{C}_0$ is a positive-semi-definite covariance matrix. The random variables $E_t\sim\mathcal{N}(0,\mathbf{Q})$ and $D_t\sim\mathcal{N}(0,\mathbf{R})$ on $(E_t,D_t)\in\mathbb{Z}\times\mathbb{U}$ are independent, for all $t\in\mathbb{N}$, and of $Z_0$, with $\mathbf{R}$ positive definite. 

The process $Z_{t}$ is an underlying signal of interest that is observed by a sensor which produces the signal $U_t$. The Kalman filter for the model (\ref{sysLintheory}), (\ref{measLintheory}) is given by the two equations,

\noindent\begin{minipage}{.495\textwidth}
\begin{align}
	\overline{Z}_{t} &\,:=\ \mathsf{E}\left[ Z_t \,|\, U_1, \cdots, U_t\right] \nonumber\\ 
	&\,=\,  \mathbf{G}\overline{Z}_{t-1} \,+\, \mathbf{K}_{t}^*\left(U_{t} - \mathbf{H}\mathbf{G}\overline{Z}_{t-1}\right) \label{kalmanstateestimator} \\
	&\,=\,  \left(\mathbf{G} - \mathbf{K}_{t}^*\mathbf{H}\mathbf{G} \right) \overline{Z}_{t-1} \,+\, \mathbf{K}_{t}^* U_{t} \nonumber ~\\\nonumber
\end{align}
\end{minipage}
\begin{minipage}{.495\textwidth}
\begin{align}
	\mathbf{C}_{t} &\,:=\ \mathsf{Cov}\left[ Z_t \,|\, U_1, \cdots, U_t\right] \nonumber\\
	&\,=\, \mathsf{Cov}[ Z_t  - \widehat{Z}_{t} ] \label{kalmancovestimator} \\
	&\,=\, \left(\mathbf{I} - \mathbf{K}_{t}^*\mathbf{H} \right)\left(\mathbf{G}\mathbf{C}_{t-1}\mathbf{G}^\top + \mathbf{Q}\right) \nonumber ~\\\nonumber
\end{align}
\end{minipage}

\noindent with the optimal Kaman gain matrix,
\begin{equation}
	\mathbf{K}_{t}^* \, =\,  \left(\mathbf{G}\mathbf{C}_{t-1}\mathbf{G}^\top + \mathbf{Q}\right)\mathbf{H}^\top\,\left[\mathbf{R} + \mathbf{H}\left(\mathbf{G}\mathbf{C}_{t-1}\mathbf{G}^\top + \mathbf{Q}\right)\mathbf{H}^\top\right]^{-1} \label{kalmangain1}
\end{equation}

The Kalman filter is time-invariant and closed in the sense that the mapping,
\begin{equation}
	(\overline{Z}_{t},\mathbf{C}_{t})  \,=\, f(\overline{Z}_{t-1}, \mathbf{C}_{t-1},U_t) \,=:\, {F}_{t}(\overline{Z}_{t-1}, \mathbf{C}_{t-1}) \label{kalmanmapping}
\end{equation}
does not depend on $t\in\mathbb{N}$ and depends only on the static model parameters $(\mathbf{G}, \mathbf{H}, \mathbf{Q}, \mathbf{R})$ and the current-time observational input $U_t$, i.e. it also does not depend on any recursive computation of higher-order moments, etc. We define the filter state $X_t:=(\overline{Z}_{t},\mathbf{C}_{t})\in\mathbb{X}$ via a slight abuse of notation.

\begin{corollary} \label{linearneuralFilterTheorem}
	Consider a linear-Gaussian signal and observation model (\ref{sysLintheory}), (\ref{measLintheory}). Assume the spectral radius of $\mathbf{G}$ is strictly less than one. Let $\widehat{f}:\mathbb{X}\times\mathbb{U}\rightarrow\mathbb{X}$ denote a RNN of the special form (\ref{RNNspecialform}), see also (\ref{genericspecialformRNN}), taking as input observations in the ordered sequence $\{{U_t}\,|\,t\in\mathbb{N}\}$. Then for any $\epsilon>0$ there exists a finite $\widehat{f}$ and an initialisation $\widehat{x}_0\in\mathbb{X}$ such that,
\begin{equation}
	 \mathbb{E}\left[\, \left\| \, X_t^{x_0}\, -\, \widehat{F}_{t}\cdots\widehat{F}_2\cdot\widehat{F}_1(\widehat{x}_0)\,\right\|^2\, \right]^{{1}/{2}} \,\leq\,\, \epsilon,~\qquad \forall\,t\in\mathbb{N},~~\forall x_0\in\mathbb{X}
\end{equation}
\end{corollary}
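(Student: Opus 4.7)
The plan is to cast the Kalman filter map $f$ of (\ref{kalmanmapping}) as an exponentially $2$-contractive random dynamical system on the filter state space $\mathbb{X}$ driven by the observation sequence $\{U_t\}$, and then invoke Theorem \ref{theoremuniversaltrajapprox} together with the asymptotic-stationarity argument already used to establish Corollary \ref{corollarylinear}. To set up the stationary driving input, choose $Z_0\sim\mathcal{N}(0,\Sigma_\infty)$ where $\Sigma_\infty = \mathbf{G}\Sigma_\infty\mathbf{G}^\top + \mathbf{Q}$ is the unique positive semi-definite solution of the discrete Lyapunov equation (which exists because $\mathbf{G}$ is Schur stable); under this initialisation both $\{Z_t\}$ and $\{U_t = \mathbf{H} Z_t + D_t\}$ are stationary Gaussian processes with finite second moments, and the induced sequence $\{F_t(\cdot) = f(\cdot,U_t)\}$ is stationary on $\mathbb{X}$.

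The main technical step is to verify the exponential $2$-contractivity condition (\ref{stabilitycond}) of Lemma \ref{eltonlemma} for $\{F_t\}$. Because the Riccati iteration $\mathbf{C}_t\mapsto\mathbf{C}_{t+1}$ is deterministic and decouples from $\overline{Z}_t$, I would split the contractivity argument in two. First, for $\mathbf{G}$ Schur stable the stabilisability of $(\mathbf{G},\mathbf{Q}^{1/2})$ and detectability of $(\mathbf{G},\mathbf{H})$ hold trivially, so the classical Kalman filter convergence theorem gives $\mathbf{C}_t\to\mathbf{C}_\infty$ exponentially uniformly for any bounded PSD initialisation $\mathbf{C}_0$, and hence $\mathbf{K}_t\to\mathbf{K}_\infty$ exponentially. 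Second, the steady-state closed-loop matrix $\mathbf{G}-\mathbf{K}_\infty\mathbf{H}\mathbf{G}$ is itself Schur stable (standard steady-state Kalman filter stability), so by a routine perturbation argument the product $\prod_{s\leq t}(\mathbf{G}-\mathbf{K}_s\mathbf{H}\mathbf{G})$ propagating mean-differences has operator norm decaying exponentially in $t$. Combining the two bounds on $\|\mathbf{C}_t^x - \mathbf{C}_t^{x_0}\|$ and $\|\overline{Z}_t^x - \overline{Z}_t^{x_0}\|_2$ yields (\ref{stabilitycond}) on the combined state $(\overline{Z},\mathbf{C})$ with $p=2$.

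The hard part will be that $f$ is not globally Lipschitz on all of $\mathbb{X}$, because the Kalman gain $\mathbf{K}(\mathbf{C})$ involves a matrix inverse that is not uniformly bounded in $\mathbf{C}$. I would address this by noting that the covariance orbit is uniformly bounded along any trajectory starting from a bounded $\mathbf{C}_0$, via the Lyapunov sandwich $0\preceq\mathbf{C}_t\preceq\mathbf{G}^t\mathbf{C}_0(\mathbf{G}^t)^\top + \sum_{s<t}\mathbf{G}^s\mathbf{Q}(\mathbf{G}^s)^\top$, so all filter trajectories of interest evolve in a forward-invariant bounded subset $\mathbb{X}_\star\subset\mathbb{X}$ on which $f$ is Lipschitz in the state argument; if desired, $f$ can be extended Lipschitz outside $\mathbb{X}_\star$ (for instance by capping the gain) without altering any trajectory originating in $\mathbb{X}_\star$. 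With these ingredients, Lemma \ref{lemmaUALp} and Theorem \ref{theoremuniversaltrajapprox} produce a RNN $\widehat{f}$ of the form (\ref{RNNspecialform}) delivering the claimed time-uniform $L^2$-bound under stationary initialisation.

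Finally, to recover the statement for arbitrary $x_0\in\mathbb{X}$ and arbitrary law of $Z_0$, I would proceed as in the proof of Corollary \ref{corollarylinear}: the deviation $Z_t - Z_t^{\mathrm{stat}} = \mathbf{G}^t(Z_0 - Z_0^{\mathrm{stat}})$ decays exponentially in $L^2$, so the corresponding perturbation of $\{U_t\}$ feeds an exponentially vanishing input into the contractive filter and produces only a summable, uniformly small error. Choosing $\widehat{x}_0 = x_0$ then removes the initialisation gap at $t=0$, and the exponential contractivity of both $F_t$ and $\widehat{F}_t$ ensures the uniform-in-$t$ bound for every $x_0\in\mathbb{X}$, completing the argument.
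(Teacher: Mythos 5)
Your proposal is correct in outline, but it takes a more self-contained route than the paper, which simply reduces the corollary to Corollary \ref{corollarylinear} ``after appropriate notational identifications and use of some basic properties of the optimal Kalman filter'' and defers all details to a companion reference. The paper's intended path is: note that the Riccati recursion and hence the gain converge exponentially to steady state regardless of $\mathbf{C}_0$, so the mean recursion is, up to an exponentially vanishing perturbation, the time-invariant affine system $\overline{Z}_t=(\mathbf{G}-\mathbf{K}_\infty\mathbf{H}\mathbf{G})\overline{Z}_{t-1}+\mathbf{K}_\infty U_t$ driven by the asymptotically stationary output of the Schur-stable signal model --- exactly the setting of Corollary \ref{corollarylinear} after stacking with the signal state. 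You instead verify the hypotheses of Theorem \ref{theoremuniversaltrajapprox} directly for the full nonlinear filter map on $(\overline{Z},\mathbf{C})$: stationary initialisation via the Lyapunov equation, exponential $2$-contractivity via Riccati convergence plus Schur stability of the steady-state closed loop, and a restriction to a forward-invariant bounded covariance set to obtain a global Lipschitz bound. This buys a treatment that is faithful to the corollary's state definition $X_t=(\overline{Z}_t,\mathbf{C}_t)$ (the affine reduction only directly covers the mean component, leaving the deterministic covariance trajectory to be handled separately) and makes explicit the two points the paper glosses over --- the non-global Lipschitzness of the gain map in $\mathbf{C}$ and the uniformity of the contraction constants, both of which your restriction to a bounded invariant subset of $\mathbb{X}$ resolves. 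The cost is length; the paper's reduction is shorter because Corollary \ref{corollarylinear} already packages the asymptotic-stationarity bookkeeping that you redo by hand in your final step.
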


\begin{proof}
This is now a straightforward application of Corollary \ref{corollarylinear} after appropriate notational identifications and use of some basic properties of the optimal Kalman filter. We omit the details. See \cite{BishopBonilla2022}.
\end{proof}

We remark that the observation sequence is neither Markov nor stationary in general, and the hypothesis of the corollary do not call for these conditions. 

The topic of recurrent neural networks and universal approximation of general Bayesian filters (including time-uniform results) is explored further in \cite{BishopBonilla2022}, with additional examples and empirical results (also contrasting the time-uniform result of Corollary \ref{linearneuralFilterTheorem} with the finite-horizon result in Theorem \ref{theoremFiniteTimeTrajApprox}).

\subsection{Simple Counterexamples with a Discrete-Time Ornstein–Uhlenbeck Process}

Let $\mathbb{X}=\mathbb{R}$. Consider a very simple scalar system of the form,
\begin{equation}
	X^{\mu_0}_t \,=\, f(X_{t-1},U_t) \,:=\,  \varrho + \alpha(X_{t-1} - \varrho) + U_t \label{ar1true}
\end{equation}
where $\alpha\in\mathbb{R}$ is a deterministic parameter, and $U_t\sim\mathcal{N}(0,\varsigma^2)$, for all $t\in\mathbb{N}$. Let $\mu_0 = \mathcal{N}(\varrho_0,\varsigma_0^2)$. Then $X^{\mu_0}_t\sim\mathcal{N}(\varrho_t,\varsigma_t^2)$ is Gaussian for all $t\in\mathbb{N}$ and,
\begin{align}
		\varrho_t \,=\, (1-\alpha^t)\,\varrho + \alpha^t\,\varrho_0 \qquad\mathrm{and}\qquad \varsigma_t\,=\, \alpha^{2t}\varsigma^2_0 + \frac{1-\alpha^{2t}}{1-\alpha^{2}}\varsigma^2
\end{align}
The hypotheses of Lemma \ref{eltonlemma} and Theorem \ref{theoremuniversaltrajapprox} hold with $|\alpha|<1$ and $\{X^{\mu_0}_t \,|\,t\in\mathbb{N}\}$ is asymptotically stationary with Gaussian invariant measure $\mu_\infty:=\mathcal{N}(\varrho,\varsigma^2/(1-\alpha^2))$. If $|\alpha|>1$ the process is unstable.  

Consider the simple feedback RNN of (\ref{RNNspecialform}), see also (\ref{genericspecialformRNN}). Let 
\begin{align}
	\widehat{X}^{X_0}_t \,=\, \widehat{f}(\widehat{X}_{t-1},U_t) \,:=\, \max\{0, U_t + \varrho - \alpha\varrho + b + \alpha\,\widehat{X}_{t-1}\} - b + \delta \label{ar1NN}
\end{align}
which is of the form (\ref{RNNspecialform}). The parameter $\delta\in\mathbb{R}$ will be used to introduce an approximation error (noting it is a very simplistic error). With $\delta=0$, picking the constant $b>0$ large enough ensures that for all practical purposes $\widehat{f}=f$, e.g. for all practical realisations of $\omega\in\Omega$ simulated. 

In Figure \ref{fig:samplesAR1} we plot sample trajectories for the process (\ref{ar1true}) with $\varrho_0=20$, $\varsigma_0^2=\varsigma^2=1$ and $\alpha\in\{0.99,1,1.001\}$. We plot with each sample trajectory the empirical distribution of the trajectory. 

\begin{figure}[!ht]
    \centering
    \includegraphics[width=0.975\textwidth]{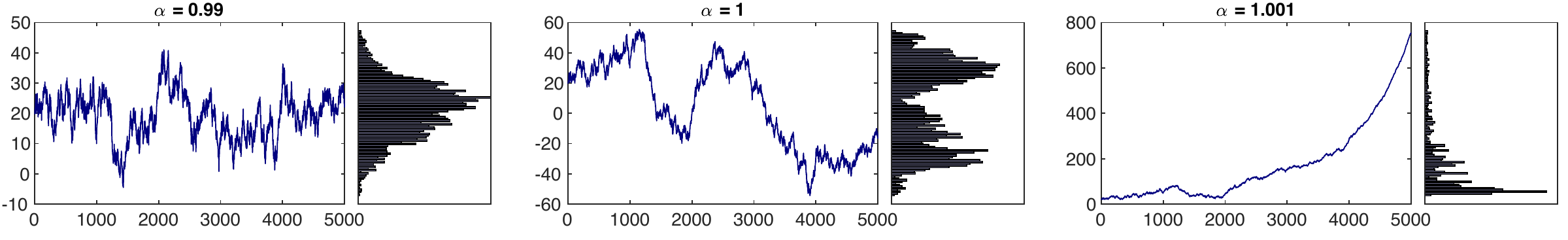}
    \caption{Sample trajectories for a range of $\alpha$ values showing the corresponding stability/stationarity. }
    \label{fig:samplesAR1}
\end{figure}

Now we introduce some error $|\delta|>0$ into the approximation (\ref{ar1NN}) and compare trajectories with (\ref{ar1true}). We initialise $\widehat{X}_0={X}_0$ almost surely to emphasise the results of Theorem \ref{theoremuniversaltrajapprox}. We plot in Figure \ref{fig:rmseAR1} the empirical root mean squared error averaged over $5000$ realisations of the random input for all $\delta$ ranging from $0.005$ to $0.1$ in $0.005$ increments. The results are unsurprising and consistent with Theorem \ref{theoremuniversaltrajapprox} and with the theory of perturbed stable/unstable dynamical systems \cite{Sastry1999a,Iserles2009a}.

\begin{figure}[!ht]
    \centering
    \includegraphics[width=0.975\textwidth]{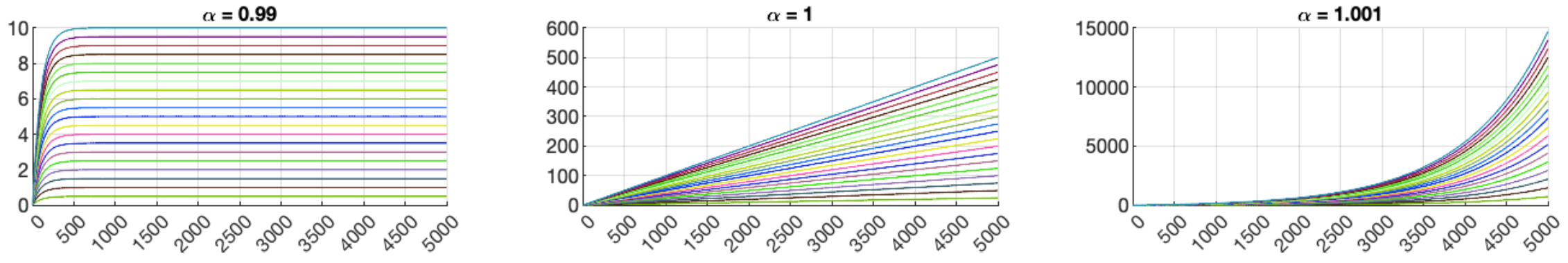}
    \caption{Root mean square errors. }
    \label{fig:rmseAR1}
\end{figure}

We have considered the case in which the random maps are stationary and given a counterexample when the sequence is not exponentially $p$-contractive (i.e. when $|\alpha|>1$). To see where things may go wrong when stationarity is violated, consider just the case in which $|\alpha|<1$ and let $U_t = \beta \,U_{t-1} + U'_t$ as in (\ref{Uasymptoticstationary}) with $U_t'\sim\mathcal{N}(0,\varsigma^2)$ for all $t\in\mathbb{N}$. The sequence of random maps is exponentially $p$-contractive for any finite $\beta\in\mathbb{R}$. Consider now the identity approximation of $U_t$ given by $\sigma(U_t + b)-b$, see (\ref{ar1NN}), with some large bias $b>0$. If $|\beta|<1$, then Corollary \ref{corollarylinear} applies as $U_t$ is asymptotically stationary; in particular, there is a $b=b(\varepsilon)>0$ such that $\mathsf{E}[\|U_t - (\sigma(U_t + b)-b)\|]\leq\varepsilon$ for any $\varepsilon>0$. However, with $|\beta|>1$, and if $U_t$ moves to the negative, then there is a time $s= s(b)\in\mathbb{N}$ such that $\forall t>s$ we have $\sigma(U_t + b)-b=-b$. The approximation error for all $t>s$ grows on average and a time-uniform bound is impossible (even though the sequence of random maps is exponentially $p$-contractive). 

Stationarity and exponential contractivity seem the two natural conditions under which to consider time-uniform results like Theorem \ref{theoremuniversaltrajapprox}/Corollary \ref{corollarylinear}, at least on unbounded domains with unbounded inputs. 

We remark for relevance that the Kalman filter state estimator equation (\ref{kalmanstateestimator}) resembles an Ornstein–Uhlenbeck process with the observation process acting as the random driver. See also \cite{BishopBonilla2022} for empirical inference results with RNNs and Kalman filters, contrasting the time-uniform result of Theorem \ref{theoremuniversaltrajapprox}/Corollary \ref{linearneuralFilterTheorem} with the finite-horizon result of Theorem \ref{theoremFiniteTimeTrajApprox}.

\bibliographystyle{apalike}
\bibliography{paper}

\begin{thebibliography}{}

\bibitem[Anil et~al., 2019]{AnilLucasGrosse2019a}
Anil, C., Lucas, J., and Grosse, R. (2019).
\newblock {Sorting Out Lipschitz Function Approximation}.
\newblock In {\em Proceedings of the 36th International Conference on Machine
  Learning}, volume~97 of {\em Proceedings of Machine Learning Research}, pages
  291--301. {PMLR}.

\bibitem[Arnold, 2003]{Arnold2003a}
Arnold, L. (2003).
\newblock {\em {Random Dynamical Systems}}.
\newblock Springer, corrected 2nd print edition.

\bibitem[Arnold and Crauel, 1992]{ArnoldCrauel1992a}
Arnold, L. and Crauel, H. (1992).
\newblock {Iterated Function Systems and Multiplicative Ergodic Theory}.
\newblock In Pinsky, M.~A. and Wihstutz, V., editors, {\em Diffusion Processes
  and Related Problems in Analysis, Volume {II}: Stochastic Flows}, pages
  283--305. Birkh\"auser.

\bibitem[Barnsley and Elton, 1988]{BarnsleyElton1988a}
Barnsley, M.~F. and Elton, J.~H. (1988).
\newblock {A new class of Markov processes for image encoding}.
\newblock {\em Advances in Applied Probability}, 20(1):14--32.

\bibitem[Barron, 1994]{Barron1994a}
Barron, A.~R. (1994).
\newblock Approximation and estimation bounds for artificial neural networks.
\newblock {\em Machine Learning}, 14(1):115--133.

\bibitem[Bishop and Bonilla, 2022]{BishopBonilla2022}
Bishop, A.~N. and Bonilla, E.~V. (2022).
\newblock {Recurrent Neural Networks and Universal Approximation of Bayesian
  Filters}.
\newblock {\em arXiv e-print, arXiv:2211.00335}.

\bibitem[Bougerol and Picard, 1992]{BougerolPicard1992a}
Bougerol, P. and Picard, N. (1992).
\newblock {Strict Stationarity of Generalized Autoregressive Processes}.
\newblock {\em The Annals of Probability}, 20(4):1714--1730.

\bibitem[Boyd and Chua, 1985]{BoydChua1985a}
Boyd, S. and Chua, L.~O. (1985).
\newblock {Fading memory and the problem of approximating nonlinear operators
  with Volterra series}.
\newblock {\em {IEEE} Transactions on Circuits and Systems}, 32(11):1150--1161.

\bibitem[Burton and R\"osler, 1995]{BurtonRoesler1995a}
Burton, R.~M. and R\"osler, U. (1995).
\newblock {An $L_2$ convergence theorem for random affine mappings}.
\newblock {\em Journal of Applied Probability}, 32(1):183--192.

\bibitem[Chow and Li, 2000]{ChowLi2000a}
Chow, T. W.~S. and Li, X.-D. (2000).
\newblock Modeling of continuous time dynamical systems with input by recurrent
  neural networks.
\newblock {\em {IEEE} Transactions on Circuits and Systems I: Fundamental
  Theory and Applications}, 47(4):575--578.

\bibitem[Cohen et~al., 2019]{CohenHusterCohen2019a}
Cohen, J. E.~J., Huster, T., and Cohen, R. (2019).
\newblock {Universal Lipschitz Approximation in Bounded Depth Neural Networks}.
\newblock {\em arXiv e-print, arXiv: 1904.04861}.

\bibitem[Cornfeld et~al., 1982]{CornfeldFominSinai1982a}
Cornfeld, I.~P., Fomin, S.~V., and Sinai, Y.~G. (1982).
\newblock {\em {Ergodic Theory}}.
\newblock Springer.

\bibitem[Debaly and Truquet, 2021]{DebalyTruquet2021a}
Debaly, Z.~M. and Truquet, L. (2021).
\newblock Iterations of dependent random maps and exogeneity in nonlinear
  dynamics.
\newblock {\em Econometric Theory}, 37(6):1135--1172.

\bibitem[Del~Moral and Guionnet, 2001]{DelMoralGuionnet2001a}
Del~Moral, P. and Guionnet, A. (2001).
\newblock On the stability of interacting processes with applications to
  filtering and genetic algorithms.
\newblock {\em Annales de l'Institut Henri Poincare (B) Probability and
  Statistics}, 37(2):155--194.

\bibitem[Diaconis and Freedman, 1999]{DiaconisFreedman1999a}
Diaconis, P. and Freedman, D. (1999).
\newblock {Iterated Random Functions}.
\newblock {\em SIAM Review}, 41(1):45--76.

\bibitem[Doob, 1953]{Doob1953a}
Doob, J.~L. (1953).
\newblock {\em {Stochastic Processes}}.
\newblock John Wiley \& Sons.

\bibitem[Doya, 1993]{Doya1993a}
Doya, K. (1993).
\newblock Universality of fully connected recurrent neural networks.
\newblock Technical report, University of California, San Diego.

\bibitem[Eckstein, 2020]{Eckstein2020a}
Eckstein, S. (2020).
\newblock {Lipschitz neural networks are dense in the set of all Lipschitz
  functions}.
\newblock {\em arXiv e-print, arXiv: 2009.13881}.

\bibitem[Elbrachter et~al., 2021]{ElbrachterPerekrestenkoGrohsEtAl2021a}
Elbrachter, D., Perekrestenko, D., Grohs, P., and Bolcskei, H. (2021).
\newblock {Deep Neural Network Approximation Theory}.
\newblock {\em {IEEE} Transactions on Information Theory}, 67(5):2581--2623.

\bibitem[Elman, 1990]{Elman1990a}
Elman, J.~L. (1990).
\newblock {Finding Structure in Time}.
\newblock {\em Cognitive Science}, 14(2):179--211.

\bibitem[Elton, 1990]{Elton1990a}
Elton, J.~H. (1990).
\newblock {A multiplicative ergodic theorem for Lipschitz maps}.
\newblock {\em Stochastic Processes and their Applications}, 34(1):39--47.

\bibitem[Funahashi and Nakamura, 1993]{FunahashiNakamura1993a}
Funahashi, K.-i. and Nakamura, Y. (1993).
\newblock Approximation of dynamical systems by continuous time recurrent
  neural networks.
\newblock {\em Neural Networks}, 6(6):801--806.

\bibitem[Gonon and Ortega, 2020]{GononOrtega2020a}
Gonon, L. and Ortega, J.-P. (2020).
\newblock {Reservoir Computing Universality With Stochastic Inputs}.
\newblock {\em {IEEE} Transactions on Neural Networks and Learning Systems},
  31(1):100--112.

\bibitem[Gonon and Ortega, 2021]{GononOrtega2021a}
Gonon, L. and Ortega, J.-P. (2021).
\newblock Fading memory echo state networks are universal.
\newblock {\em Neural Networks}, 138:10--13.

\bibitem[Grigoryeva and Ortega, 2018a]{GrigoryevaOrtega2018a}
Grigoryeva, L. and Ortega, J.-P. (2018a).
\newblock {Echo State Networks are Universal}.
\newblock {\em Neural Networks}, 108:495--508.

\bibitem[Grigoryeva and Ortega, 2018b]{GrigoryevaOrtega2018b}
Grigoryeva, L. and Ortega, J.-P. (2018b).
\newblock Universal discrete-time reservoir computers with stochastic inputs
  and linear readouts using non-homogeneous state-affine systems.
\newblock {\em Journal of Machine Learning Research}, 19(24):1--40.

\bibitem[Hanin and Sellke, 2017]{HaninSellke2017a}
Hanin, B. and Sellke, M. (2017).
\newblock {Approximating Continuous Functions by ReLU Nets of Minimal Width}.
\newblock {\em arXiv e-print, arXiv:1710.11278}.

\bibitem[Hanson and Raginsky, 2020]{HansonRaginsky2020a}
Hanson, J. and Raginsky, M. (2020).
\newblock {Universal Simulation of Stable Dynamical Systems by Recurrent Neural
  Nets}.
\newblock In {\em Proceedings of the 2nd Conference on Learning for Dynamics
  and Control}, volume 120 of {\em Proceedings of Machine Learning Research},
  pages 384--392. PMLR.

\bibitem[Hanson et~al., 2021]{HansonRaginskySontag2021a}
Hanson, J., Raginsky, M., and Sontag, E. (2021).
\newblock {Learning Recurrent Neural Net Models of Nonlinear Systems}.
\newblock In {\em Proceedings of the 3rd Conference on Learning for Dynamics
  and Control}, volume 144 of {\em Proceedings of Machine Learning Research},
  pages 425--435. {PMLR}.

\bibitem[Hart et~al., 2020]{HartHookDawes2020a}
Hart, A., Hook, J., and Dawes, J. (2020).
\newblock Embedding and approximation theorems for echo state networks.
\newblock {\em Neural Networks}, 128:234--247.

\bibitem[Haykin, 2009]{Haykin2009a}
Haykin, S. (2009).
\newblock {\em {Neural Networks and Learning Machines}}.
\newblock Pearson Prentice Hall, 3rd edition.

\bibitem[Hornik et~al., 1989]{HornikStinchcombeWhite1989a}
Hornik, K., Stinchcombe, M., and White, H. (1989).
\newblock Multilayer feedforward networks are universal approximators.
\newblock {\em Neural Networks}, 2(5):359--366.

\bibitem[Huster et~al., 2018]{HusterChiangChadha2018a}
Huster, T., Chiang, C.-Y.~J., and Chadha, R. (2018).
\newblock {Limitations of the Lipschitz Constant as a Defense Against
  Adversarial Examples}.
\newblock In {\em Joint European Conference on Machine Learning and Knowledge
  Discovery in Databases: 2018 ECML PKDD 2018 Workshops}, pages 16--29.

\bibitem[Hutchinson, 1981]{Hutchinson1981a}
Hutchinson, J.~E. (1981).
\newblock {Fractals and Self Similarity}.
\newblock {\em Indiana University Mathematics Journal}, 30(5):713.

\bibitem[Iserles, 2009]{Iserles2009a}
Iserles, A. (2009).
\newblock {\em {A First Course in the Numerical Analysis of Differential
  Equations}}.
\newblock Cambridge University Press, 2nd edition.

\bibitem[Jaeger, 2001]{Jaeger2001a}
Jaeger, H. (2001).
\newblock {The ``echo state'' approach to analysing and training recurrent
  neural networks: with an Erratum note}.
\newblock Technical Report GMD Report 148, German National Research Center for
  Information Technology.

\bibitem[Jin et~al., 1995]{JinNikiforukGupta1995a}
Jin, L., Nikiforuk, P.~N., and Gupta, M.~M. (1995).
\newblock Approximation of discrete-time state-space trajectories using dynamic
  recurrent neural networks.
\newblock {\em {IEEE} Transactions on Automatic Control}, 40(7):1266--1270.

\bibitem[Kidger and Lyons, 2020]{KidgerLyons2020a}
Kidger, P. and Lyons, T. (2020).
\newblock {Universal Approximation with Deep Narrow Networks}.
\newblock In {\em Proceedings of the Thirty Third Conference on Learning
  Theory}, volume 125 of {\em Proceedings of Machine Learning Research}, pages
  2306--2327. PMLR.

\bibitem[Kifer, 1986]{Kifer1986a}
Kifer, Y. (1986).
\newblock {\em {Ergodic Theory of Random Transformations}}.
\newblock Birkh\"auser.

\bibitem[Kim et~al., 2011]{KimPatronBraatz2011a}
Kim, K. K.~K., Patron, E.~R., and Braatz, R.~D. (2011).
\newblock Universal approximation with error bounds for dynamic artificial
  neural network models: A tutorial and some new results.
\newblock In {\em Proceedings of the 2011 {IEEE} International Symposium on
  Computer-Aided Control System Design}, pages 834--839.

\bibitem[Kosmatopoulos et~al.,
  1995]{KosmatopoulosPolycarpouChristodoulouEtAl1995a}
Kosmatopoulos, E.~B., Polycarpou, M.~M., Christodoulou, M.~A., and Ioannou,
  P.~A. (1995).
\newblock High-order neural network structures for identification of dynamical
  systems.
\newblock {\em {IEEE} Transactions on Neural Networks}, 6(2):422--431.

\bibitem[Leshno et~al., 1993]{LeshnoLinPinkusEtAl1993a}
Leshno, M., Lin, V.~Y., Pinkus, A., and Schocken, S. (1993).
\newblock Multilayer feedforward networks with a nonpolynomial activation
  function can approximate any function.
\newblock {\em Neural Networks}, 6(6):861--867.

\bibitem[Li, 1992]{Li1992a}
Li, L.~K. (1992).
\newblock Approximation theory and recurrent networks.
\newblock In {\em Proceedings of the 1992 International Joint Conference on
  Neural Networks}, pages 266--271.

\bibitem[Li et~al., 2005]{LiHoChow2005a}
Li, X.-D., Ho, J. K.~L., and Chow, T. W.~S. (2005).
\newblock Approximation of dynamical time-variant systems by continuous-time
  recurrent neural networks.
\newblock {\em {IEEE} Transactions on Circuits and Systems {II}: Express
  Briefs}, 52(10):656--660.

\bibitem[Lo, 1994]{Lo1994a}
Lo, J. T.-H. (1994).
\newblock Synthetic approach to optimal filtering.
\newblock {\em {IEEE} Transactions on Neural Networks}, 5(5):803--811.

\bibitem[Lu et~al., 2017]{LuPuWangEtAl2017a}
Lu, Z., Pu, H., Wang, F., Hu, Z., and Wang, L. (2017).
\newblock {The Expressive Power of Neural Networks: A View from the Width}.
\newblock In {\em Advances in Neural Information Processing Systems},
  volume~30.

\bibitem[Luko\v{s}evi\v{c}ius, 2012]{Lukosevicius2012a}
Luko\v{s}evi\v{c}ius, M. (2012).
\newblock A practical guide to applying echo state networks.
\newblock In Montavon, G., Orr, G.~B., and M\"{u}ller, K.-R., editors, {\em
  Neural Networks: Tricks of the Trade}, pages 659--686. Springer.

\bibitem[Maass et~al., 2007]{MaassJoshiSontag2007a}
Maass, W., Joshi, P., and Sontag, E.~D. (2007).
\newblock {Computational Aspects of Feedback in Neural Circuits}.
\newblock {\em {PLoS} Computational Biology}, 3(1):e165.

\bibitem[Maass et~al., 2002]{MaassNatschlaegerMarkram2002a}
Maass, W., Natschl\"ager, T., and Markram, H. (2002).
\newblock {Real-Time Computing Without Stable States: A New Framework for
  Neural Computation Based on Perturbations}.
\newblock {\em Neural Computation}, 14(11):2531--2560.

\bibitem[Maass and Sontag, 2000]{MaassSontag2000a}
Maass, W. and Sontag, E.~D. (2000).
\newblock {Neural Systems as Nonlinear Filters}.
\newblock {\em Neural Computation}, 12(8):1743--1772.

\bibitem[Matthews, 1993]{Matthews1993a}
Matthews, M.~B. (1993).
\newblock Approximating nonlinear fading-memory operators using neural network
  models.
\newblock {\em Circuits, Systems, and Signal Processing}, 12(2):279--307.

\bibitem[Matthews and Moschytz, 1994]{MatthewsMoschytz1994a}
Matthews, M.~B. and Moschytz, G.~S. (1994).
\newblock The identification of nonlinear discrete-time fading-memory systems
  using neural network models.
\newblock {\em {IEEE} Transactions on Circuits and Systems {II}: Analog and
  Digital Signal Processing}, 41(11):740--751.

\bibitem[Nakamura and Nakagawa, 2009]{NakamuraNakagawa2009a}
Nakamura, Y. and Nakagawa, M. (2009).
\newblock {Approximation Capability of Continuous Time Recurrent Neural
  Networks for Non-autonomous Dynamical Systems}.
\newblock In {\em Proceedings of the 19th International Conference on
  Artificial Neural Networks: Part II}, pages 593--602.

\bibitem[Neumayer et~al., 2022]{NeumayerGoujonBohraEtAl2022a}
Neumayer, S., Goujon, A., Bohra, P., and Unser, M. (2022).
\newblock {Approximation of Lipschitz Functions using Deep Spline Neural
  Networks}.
\newblock {\em arXiv e-print, arXiv:2204.06233}.

\bibitem[Ohno, 1983]{Ohno1983a}
Ohno, T. (1983).
\newblock Asymptotic behaviors of dynamical systems with random parameters.
\newblock {\em Publications of the Research Institute for Mathematical
  Sciences, Kyoto University}, 19(1):83--98.

\bibitem[Park et~al., 2021]{ParkYunLeeEtAl2021a}
Park, S., Yun, C., Lee, J., and Shin, J. (2021).
\newblock {Minimum Width for Universal Approximation}.
\newblock In {\em In Proceedings of the Ninth International Conference on
  Learning Representations}.

\bibitem[Pascanu et~al., 2014]{PascanuGulcehreChoEtAl2014a}
Pascanu, R., Gulcehre, C., Cho, K., and Bengio, Y. (2014).
\newblock How to construct deep recurrent neural networks.
\newblock In {\em Proceedings of the 2nd International Conference on Learning
  Representations}.

\bibitem[Pascanu et~al., 2013]{PascanuMikolovBengio2013a}
Pascanu, R., Mikolov, T., and Bengio, Y. (2013).
\newblock On the difficulty of training recurrent neural networks.
\newblock In {\em Proceedings of the 30th International Conference on Machine
  Learning}, volume~28, pages 1310--1318. PMLR.

\bibitem[Patan, 2008]{Patan2008a}
Patan, K. (2008).
\newblock Approximation of state-space trajectories by locally recurrent
  globally feed-forward neural networks.
\newblock {\em Neural Networks}, 21(1):59--64.

\bibitem[Pinkus, 1999]{Pinkus1999a}
Pinkus, A. (1999).
\newblock {Approximation theory of the {MLP} model in neural networks}.
\newblock {\em Acta Numerica}, 8:143--195.

\bibitem[Roberts et~al., 1998]{RobertsRosenthalSchwartz1998a}
Roberts, G.~O., Rosenthal, J.~S., and Schwartz, P.~O. (1998).
\newblock {Convergence Properties of Perturbed Markov Chains}.
\newblock {\em Journal of Applied Probability}, 35(1):1--11.

\bibitem[Sandberg, 1991]{Sandberg1991a}
Sandberg, I.~W. (1991).
\newblock Approximation theorems for discrete-time systems.
\newblock {\em {IEEE} Transactions on Circuits and Systems}, 38(5):564--566.

\bibitem[Sastry, 1999]{Sastry1999a}
Sastry, S. (1999).
\newblock {\em {Nonlinear Systems: Analysis, Stability, and Control}}.
\newblock Springer.

\bibitem[Sch{\"a}fer and Zimmermann, 2007]{SchaeferZimmermann2007a}
Sch{\"a}fer, A.~M. and Zimmermann, H.-G. (2007).
\newblock {Recurrent Neural Networks are Universal Approximators}.
\newblock {\em International Journal of Neural Systems}, 17(04):253--263.

\bibitem[Sontag, 1992]{Sontag1992a}
Sontag, E.~D. (1992).
\newblock Neural nets as systems models and controllers.
\newblock In {\em Proceedings of the Seventh Yale Workshop on Adaptive and
  Learning Systems}, pages 73--79.

\bibitem[Steinsaltz, 1999]{Steinsaltz1999a}
Steinsaltz, D. (1999).
\newblock {Locally Contractive Iterated Function Systems}.
\newblock {\em The Annals of Probability}, 27(4):1952--1979.

\bibitem[Stenflo, 2012]{Stenflo2012a}
Stenflo, O. (2012).
\newblock A survey of average contractive iterated function systems.
\newblock {\em Journal of Difference Equations and Applications},
  18(8):1355--1380.

\bibitem[Telgarsky, 2016]{Telgarsky2016a}
Telgarsky, M. (2016).
\newblock Benefits of depth in neural networks.
\newblock In {\em Proceedings of the 29th Annual Conference on Learning
  Theory}, volume~49 of {\em Proceedings of Machine Learning Research}, pages
  1517--1539. {PMLR}.

\bibitem[Virmaux and Scaman, 2018]{VirmauxScaman2018a}
Virmaux, A. and Scaman, K. (2018).
\newblock Lipschitz regularity of deep neural networks: analysis and efficient
  estimation.
\newblock In {\em Advances in Neural Information Processing Systems},
  volume~31.

\bibitem[Wang and Qu, 2022]{WangQu2022a}
Wang, M.-X. and Qu, Y. (2022).
\newblock Approximation capabilities of neural networks on unbounded domains.
\newblock {\em Neural Networks}, 145:56--67.

\bibitem[Wu and Shao, 2004]{WuShao2004a}
Wu, W.~B. and Shao, X. (2004).
\newblock Limit theorems for iterated random functions.
\newblock {\em Journal of Applied Probability}, 41(2):425--436.

\bibitem[Yarotsky, 2018]{Yarotsky2018a}
Yarotsky, D. (2018).
\newblock {Optimal approximation of continuous functions by very deep ReLU
  networks}.
\newblock In {\em Proceedings of the 31st Conference On Learning Theory},
  volume~75 of {\em Proceedings of Machine Learning Research}, pages 639--649.
  {PMLR}.

\bibitem[Yildiz et~al., 2012]{YildizJaegerKiebel2012a}
Yildiz, I.~B., Jaeger, H., and Kiebel, S.~J. (2012).
\newblock Re-visiting the echo state property.
\newblock {\em Neural Networks}, 35:1--9.

\end{thebibliography}

\appendix

\section{Approximation Capabilities of Feedforward Networks}\label{UAFFappendix}

Let $\sigma(\cdot):=\max\{\cdot,0\}:\mathbb{R}\rightarrow\mathbb{H}\subseteq\mathbb{R}$; which acts component wise on vector $v\in\mathbb{R}^d$ so $\sigma(v)\in\mathbb{H}^d$. 

A deep feedforward neural network, with $L\in\mathbb{N}$ layers, is a special case of the RNN in (\ref{RNNspecialform}), see also (\ref{genericspecialformRNN}), and is given here in the form,
\begin{equation}
\widehat{f}:\mathbb{V}\rightarrow\mathbb{W}\qquad\Longleftrightarrow\qquad \widehat{f}(\cdot)\,:=\, \tau_L\cdot\sigma\cdot\tau_{L-1}\cdots\sigma\cdot\tau_{2}\cdot\sigma\cdot\tau_1(\cdot) 
\label{FNNform}
\end{equation}
where $\tau_l(\cdot)$ is a finite affine map and $\mathbb{V}\subseteq\mathbb{R}^{d_v}$ and $\mathbb{W}\subseteq\mathbb{R}^{d_w}$.

Universal approximation theorems for (deep) feedforward networks state that certain classes of functions over particular domains can be approximated to any desired accuracy by large, but relatively simple, compositions of the form (\ref{FNNform}). We state the following specific result used in this work.

\begin{lemma}\cite{ParkYunLeeEtAl2021a,KidgerLyons2020a,Eckstein2020a} \label{lemmaUALp}
	Let $V:\Omega\rightarrow\mathbb{V}$ be random with finite $p$-th moments, $p\geq1$. Assume $V\mapsto f(V)$ is measurable with finite $p$-th moments. For any $\epsilon>0$ there is a finite $\smash{\widehat{f}}$ in (\ref{FNNform}) with,
\begin{equation}
	 \mathsf{E}\big[\, \big\| \widehat{f}(V) - f(V)  \big\|^p \,\big]^{{1}/{p}} \,\leq\, \epsilon \label{lpfeedforwarderror}
\end{equation}
if and only if the width of each hidden layer is $\geq \max\{d_v+1,d_w\}$.

If $f$ is also Lipschitz with Lipschitz constant $\mathfrak{L}\in(0,\infty)$, then for any $\epsilon>0$ there is a finite $\widehat{f}$ of the form (\ref{FNNform}) such that (\ref{lpfeedforwarderror}) holds and also $\widehat{f}$ is Lipschitz with Lipschitz constant $\leq\mathfrak{L}$.
\end{lemma}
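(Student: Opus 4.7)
The plan is to assemble the lemma from three ingredients: the sharp minimum-width $L^p$ universal approximation theorem of Park--Yun--Lee (which refines Kidger--Lyons), a truncation argument to move between $L^p$ approximation and uniform approximation on compacta, and an explicit piecewise-linear construction for the Lipschitz-preserving clause.

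For the sufficiency of the width condition, since $V$ has finite $p$-th moments and $f(V)\in L^p$, first choose a radius $R$ so large that the tails $\mathsf{E}[\|f(V)\|^p\mathbf{1}_{\{\|V\|>R\}}]$ and the analogous tails for the approximation each contribute at most $(\epsilon/3)^p$; this is arranged by uniform integrability and by building a bounded approximation outside $B_R$. On $B_R$ the density of $C(B_R)$ in $L^p(B_R,\mathrm{Law}(V))$ reduces the problem to approximating a continuous target, to which the Park--Yun--Lee theorem applies: any continuous $g:B_R\to\mathbb{R}^{d_w}$ is uniformly approximable by a ReLU network of the form (\ref{FNNform}) provided each hidden layer has width at least $\max\{d_v+1,d_w\}$. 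Combined with the tail bound this gives (\ref{lpfeedforwarderror}). The necessity of the same width bound is the lower-bound half of that theorem, proved by exhibiting explicit targets that no narrower ReLU network of any depth can approximate in $L^p$ (using e.g. surjectivity or level-set obstructions for narrow ReLU maps).

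For the Lipschitz-preserving second assertion, construct a continuous piecewise-linear surrogate $\widetilde{f}$ of $f$ on a fine simplicial mesh of $B_R$ by setting $\widetilde{f}=f$ on the vertices and extending affinely on each simplex. Because on each simplex the affine extension is determined by differences of $f$ at vertices, its Jacobian has operator norm at most $\mathfrak{L}$, so $\widetilde{f}$ is itself $\mathfrak{L}$-Lipschitz, and $\widetilde{f}\to f$ uniformly on $B_R$ as the mesh is refined. Outside $B_R$ the McShane formula provides an $\mathfrak{L}$-Lipschitz extension whose tails are controlled by the moment assumption on $V$. By the exact representability of continuous piecewise-linear functions as ReLU networks (Arora et al.), $\widetilde{f}$ coincides with some $\widehat{f}$ of the form (\ref{FNNform}); since $\mathrm{Lip}(\widehat{f})=\mathrm{Lip}(\widetilde{f})\leq\mathfrak{L}$ the Lipschitz bound transfers. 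Equivalently, one invokes the Lipschitz universal approximation theorem of Eckstein (2020) or Neumayer--Goujon--Bohra et al. (2022) which packages exactly this conclusion.

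The main obstacle is reconciling sharp minimum width with the Lipschitz bound simultaneously. The register-style encoding that realises the minimum-width construction of Park--Yun--Lee moves information through auxiliary neurons whose affine maps, if chosen naively, can inflate the product of operator norms. The resolution is that the encoding can be written so that every affine map has operator norm one and is followed by the $1$-Lipschitz $\sigma$, so that $\mathrm{Lip}(\widehat{f})\leq\prod_l\|\tau_l\|_{\mathrm{op}}\leq\mathfrak{L}$; the two constructions can therefore be chained. Verifying this compatibility is the delicate step, but it is ultimately routine once the standard encodings are written out explicitly.
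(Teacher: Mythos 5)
Your proposal is correct and follows essentially the same route as the paper: the first clause is the sharp minimum-width $L^p$ universal approximation theorem of Park--Yun--Lee (with the standard truncation/tail argument to pass from compacta to a finite-$p$-th-moment law, as in Kidger--Lyons, Theorem 4.16), and the second clause is obtained by substituting the Lipschitz-constrained compact-domain approximation of Eckstein (or the explicit CPWL-interpolation-plus-McShane-extension construction, which amounts to the same thing) into that truncation argument. One caveat: your closing paragraph about reconciling the minimal-width encoding with the Lipschitz bound addresses a claim the lemma does not make --- the second assertion carries no width constraint, and the paper explicitly notes that minimal width is not claimed there --- so that ``delicate step'' (whose proposed resolution via operator-norm-one affine maps is in any case doubtful for the register-style encodings) can simply be dropped.
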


The first part is a standard universal approximation theorem for a measurable map with finite $p$-th moments; but with the added optimally minimal layer width. This result is given in \cite{ParkYunLeeEtAl2021a}. We point also to the nice exposition in \cite{KidgerLyons2020a}. We also note \cite{HornikStinchcombeWhite1989a,LeshnoLinPinkusEtAl1993a} which considers a single (wide) hidden layer, different activation functions, and measures with compact support. 

Approximation in the space of Lipschitz functions in the second part is studied in \cite{Eckstein2020a}, see also \cite{NeumayerGoujonBohraEtAl2022a}, for functions on compact domains and with approximation in a uniform norm. This can be extended to non-compact domains and integral norms by following the proof in \cite[Theorem 4.16]{KidgerLyons2020a} and substituting \cite[Theorem 1]{Eckstein2020a} for \cite[Proposition 4.9]{KidgerLyons2020a} in the proof of \cite[Theorem 4.16]{KidgerLyons2020a}. The steps are rather straightforward. We do not specify that approximation is possible with minimal width in this case.

\section{Universal Trajectory Approximation on Finite Time Intervals}\label{appendixFiniteTime}

In this section we provide a different proof of a universal approximation theorem for trajectories of random dynamical systems over finite horizons. We use here a drastically different RNN architecture than considered for time-uniform universal approximation, and considered in prior work on finite-horizon trajectory approximation cf. \cite{JinNikiforukGupta1995a,SchaeferZimmermann2007a}. The result in this section requires minimal assumptions on the dynamical system; in particular, no contractiveness properties are required (i.e. the system may be unstable) and no stationarity assumptions are required. This result complements those in \cite{JinNikiforukGupta1995a,SchaeferZimmermann2007a}.

\begin{theorem} \label{theoremFiniteTimeTrajApprox}
	Consider the map $f:\mathbb{X}\times\mathbb{U}\rightarrow\mathbb{X}$ of (\ref{recursiveRandSysInputDef}) and the RNN $\widehat{f}$ in (\ref{RNNform1}), (\ref{RNNform2}), (\ref{RNNform3}), see also (\ref{genericformRNN}). Let $X_0:\Omega\rightarrow\mathbb{X}$ be random with finite $p\geq1$ moments and assume $\mathsf{E}\left[\|F_t(x)-x\|^p\right]<\infty$. Then for any $\epsilon>0$ and any finite $T\in\mathbb{N}$ there is a finite $\widehat{f}$ and a random initial $H_0:\Omega\rightarrow\mathbb{H}^{d_h}$ such that,
\begin{equation}
		\mathsf{E}\big[ \,\big\| \widehat{F}_{t}\cdots\widehat{F}_2\cdot\widehat{F}_1(H_0) \,-\, {F}_{t}\cdots{F}_2\cdot {F}_1(X_0) \big\|^p\, \big]^{{1}/{p}}  ~\leq~ \epsilon
\end{equation}
for all $t\in\{0,\ldots,T\}$.
\end{theorem}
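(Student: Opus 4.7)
The strategy is to bypass state recursion entirely: I would use the RNN's feedback channel as a memory that losslessly stores the input history $(X_0, U_1, \ldots, U_t)$ together with an integer clock $c_t = t$, and use a feedforward sub-network occupying the remaining layers to approximate the unrolled composition $g_s(x_0, u_1, \ldots, u_s) := f(\cdots f(f(x_0, u_1), u_2) \cdots, u_s)$ directly from the memory. Because the approximating map is never composed with itself, single-step map approximation errors do not compound; the price is that network width scales with $T$. This is structurally very different from the contractive-stability-based construction used in Theorem \ref{theoremuniversaltrajapprox}, and is precisely what lets us drop all contractiveness and stationarity hypotheses.

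The memory is implemented exactly via the ReLU identity $v = \sigma(v) - \sigma(-v)$, which lets each real coordinate be stored losslessly as the pair $(\sigma(v), \sigma(-v))$ and recovered by an affine difference. I would allocate $2(d_x + T d_u)$ coordinates of $h_{1,t}$ to the $X_0$ slot and the $T$ slots for $U_1, \ldots, U_T$ in this positive/negative encoding, plus a scalar clock $c_t = \sigma(c_{t-1} + 1)$ (exact since $c_{t-1} \geq 0$). The feedback $\varphi_1(h_{t-1})$ reconstructs each stored value by an affine difference of its two coordinates and writes the shifted pair $(v, -v)$ into the appropriate preactivation slot of $h_{1,t}$; the input map $\tau_1(U_t)$ injects $(U_t, -U_t, 0, \ldots)$ into the free slot; a single $\sigma$ then restores the encoding. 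The random initial state $H_0$ is defined as a measurable image of $X_0$ that realises this encoding at $t = 0$ with $c_0 = 0$, so at time $t$ the buffer holds $(X_0, U_1, \ldots, U_t, 0, \ldots, 0)$ with the tail slots deterministically zero. No truncation error is incurred in the buffer.

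Layers $h_{2,t}, \ldots, h_{L-1,t}$ carry no feedback (I set $\varphi_l \equiv 0$ for $l \geq 2$) and form a feedforward network $\widehat{g}$ acting on $h_{1,t}$, with $\widehat{X}_t = \tau_L(h_{L-1,t})$. Assuming each $g_s(X_0, U_1, \ldots, U_s)$ has finite $p$-moments (which propagates inductively from the Lipschitz property of $f$ in $\mathbb{X}$ together with $\mathsf{E}\|X_0\|^p < \infty$ and $\mathsf{E}\|F_s(x)-x\|^p < \infty$, taken pointwise in $x$ where needed), I would apply Lemma \ref{lemmaUALp} to the augmented target
\begin{equation}
\Phi(c, x_0, u_1, \ldots, u_T) \,:=\, \sum_{s=0}^{T} \mathbb{I}(c = s)\, g_s(x_0, u_1, \ldots, u_s)
\end{equation}
under the RNN-consistent mixture law (writing $\mathbf{0}_{T-s}$ for the vector of $T-s$ zeros)
\begin{equation}
\nu \,:=\, \frac{1}{T+1} \sum_{s=0}^{T} \mathrm{Law}\bigl(s,\, X_0,\, U_1,\, \ldots,\, U_s,\, \mathbf{0}_{T-s}\bigr),
\end{equation}
obtaining $\widehat{g}$ with $\|\widehat{g} - \Phi\|_{L^p(\nu)} \leq \overline{\epsilon}$. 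Crucially, the tail-zero padding in the $s$-th summand of $\nu$ exactly matches the input that the memory feeds to $\widehat{g}$ at time $s$, so the per-time $L^p$-error is bounded by $(T+1)^{1/p}\,\overline{\epsilon}$, and choosing $\overline{\epsilon} = \epsilon\,(T+1)^{-1/p}$ delivers the uniform-in-$t$ estimate over $\{0, 1, \ldots, T\}$.

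The main obstacle is structural rather than analytic: fitting the shift-register update, the clock increment, and the clock-indexed feedforward readout inside the rigid layered template (\ref{RNNform1})-(\ref{RNNform3}) requires a careful allocation of widths and of the affine maps $\tau_l, \varphi_l$ so that the prescribed pattern of same-time and cross-time dataflow emerges; only the first layer needs feedback, and subsequent layers simulate a standard feedforward net evaluated afresh at each time step. The analytic content beyond Lemma \ref{lemmaUALp} is confined to the exactness of the memory (a direct consequence of $v = \sigma(v) - \sigma(-v)$) and to the construction of $H_0$ as a measurable function of $X_0$, both routine.
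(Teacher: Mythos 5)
Your proposal is correct and follows essentially the same route as the paper's proof: feedback confined to a first hidden layer acting as a shift-register memory of $(t,\widehat{x}_0,U_1,\ldots,U_t)$, followed by a feedforward network approximating the unfolded conditional functional via Lemma \ref{lemmaUALp}. Your two refinements — the exact $v=\sigma(v)-\sigma(-v)$ encoding in place of the paper's large-bias identity trick (which is only exact on a high-probability subset of $\Omega$), and the explicit mixture law $\nu$ with the $(T+1)^{1/p}$ conversion factor that makes the application of Lemma \ref{lemmaUALp} precise — tighten two points the paper leaves informal, at the cost of doubling the memory width.
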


The assumptions in the preceding theorem are mild. The model (\ref{recursiveRandSysInputDef}) is based on Lipschitz maps, not necessarily contractive, as these were the main focus of this article. However, in the preceding theorem this can be relaxed if $f$ preserves $p$-integrability. The horizon $T$ can be any finite non-negative integer, but the size of the RNN constructed in the proof grows with $T\in\mathbb{N}$.

The proof given subsequently does not rely on Gr\"onwall-Bellman-type inequalities and the error at every step along the finite interval can potentially be bounded with similar precision; i.e. the error at any individual time is not necessarily growing along the interval with the network construction used to prove Theorem \ref{theoremFiniteTimeTrajApprox}. This contrasts with the constructions used to prove the main results in \cite{JinNikiforukGupta1995a,SchaeferZimmermann2007a} which rely on bounding the approximation error small enough at the initial step along a finite time interval, and then allowing this error to grow (typically exponentially) so that at the terminal time $T$ of the finite interval the error remains below the desired error precision threshold (with an application of simplistic Gr\"onwall-Bellman inequalities). Qualitatively, the result/proof of Theorem \ref{theoremFiniteTimeTrajApprox} and those in, e.g., \cite{JinNikiforukGupta1995a,SchaeferZimmermann2007a} suggest different approximation capabilities (due to differing network constructions in the proofs) even though the statement of the results in each case are mostly the same.

\begin{proof}

We consider a special form of (\ref{RNNform1}), (\ref{RNNform2}), (\ref{RNNform3}) in which feedback is solely from the first hidden layer to itself. This is in contrast to (\ref{RNNspecialform}) where feedback is from the last layer, or equivalently the last hidden layer, to the first hidden layer and thus practically a recursion on $\widehat{X}_t$, see also (\ref{genericspecialformRNN}). In this proof we will use a wide hidden first layer with feedback to itself as a kind of memory bank for storing the external inputs over time. This structure was originally applied in \cite{Lo1994a} to prove a RNN-based approximation result for optimal filtering on finite time intervals. 

With that said, we write the RNN form as,
\begin{align}
	\widehat{X}_t \,=\, \tau_L\cdot\sigma\cdot\tau_{l-1}\cdots\sigma\cdot\tau_2'\cdot\sigma(\tau_1(U_t)  + \varphi(h_{1,t-1}))  \label{RNNformfixedhorizon}
\end{align}
with notation carrying forward from (\ref{RNNform1}), (\ref{RNNform2}), (\ref{RNNform3}) and where $\varphi:\mathbb{R}^{d_{h_1}}\rightarrow\mathbb{R}^{d_{h_1}}$ is a finite affine feedback map.

We will consider the structure of the affine maps $(\tau_1 +\varphi)$ and of $\tau_{1.5}$ in the re-defined $\tau'_2:=\tau_{2}\cdot\tau_{1.5}$; noting the sum of two affine maps is affine, as is the composition so no generality is lost.
 
We will write these maps as,
\begin{equation}
	\tau_1:\mathbb{U}\rightarrow\mathbb{H}^{d_{h_1}}, \qquad u ~\mapsto~ \tau_1(u) \,:=\, \mathbf{M}_{\tau_1}\,u  + \mathbf{b}_{\tau_{1}} 
\end{equation}
\begin{equation}
	\varphi:\mathbb{H}^{d_{h_1}}\rightarrow\mathbb{H}^{d_{h_1}}, \qquad  h ~\mapsto~ \varphi(h) \,:=\, \mathbf{M}_\varphi\,h   
\end{equation}
and 
\begin{equation}
	\tau_{1.5}:\mathbb{H}^{d_{h_1}}\rightarrow\mathbb{H}^{d_{h_{1.5}}}=\mathbb{H}^{d_{h_{1}}}, \qquad  h ~\mapsto~ \tau_{1.5}(h) \,:=\, h + \mathbf{b}_{\tau_{1.5}} 
\end{equation}
We will use this one and a half layer representation to construct a new input for a feedforward neural network. The new input will consist of a time keeper, the initial state for the dynamical system to be simulated, and a memory of all external inputs received. 

To gain some intuition on how we proceed, as per \cite[Theorem 4.16]{KidgerLyons2020a}, note that for some scalar random variable $U:\Omega\rightarrow\mathbb{R}$, the identity approximation $(\max\{0,U+b\}-b)$ can always be made as close as desired to $U$ in the sense of $\mathsf{E}[|U - (\max\{0,U+b\}-b)|^p]$ with a sufficiently large bias $b>0$.

In particular, consider
\begin{align}
	h_{1.5,t} \,&=\, \tau_{1.5}(h_{1,t}) \\
	h_{1,t} &=\, \sigma(\tau_{1}(U_t)+\varphi(h_{1,t-1}))
\end{align}
We will design the first layer and a half to yield an output at any time $t\in\{1,\ldots,T\}$ in the form,
\begin{equation}
	h_{1.5,t} \,=\, \left(t, \widehat{x}_0, U_t, U_{t-1}, \ldots, U_1, 0, 0, \ldots, 0\right)
\end{equation}
where the $(U_t, U_{t-1}, \ldots, U_1, 0, 0, \ldots, 0)$ component is considered over the (sub)-domain of $\Omega$ on which $\mathsf{P}$ places most mass, as desired, see \cite[proof of Theorem 4.16]{KidgerLyons2020a}. And then similarly with,
\begin{equation}
	h_{1.5,T} \,=\, \left(T, \widehat{x}_0, U_T, U_{T-1}, \ldots, U_1 \right)
\end{equation}
With these dimensions in mind, to achieve this construction, let,
\begin{equation}
	\mathbf{M}_{\tau_1}\,=\left[\begin{array}{c} 	\left[0~0~\ldots~0\right]\\ 
									\mathbf{0}\\
									\mathbf{I} \\
										 \mathbf{0} \\
										  \vdots  \\
										  \mathbf{0} \end{array}\right],\qquad
	\mathbf{M}_{\varphi}\,=\left[\begin{array}{ccccccc} 1 &0~0 & 0~0 & 0~0 & \cdots & 0~0 & 0~0\\
										0 & \mathbf{I} & \mathbf{0} & \mathbf{0} & \mathbf{0} & \cdots & \mathbf{0}\\
										0 & \mathbf{0}& \mathbf{0} & \mathbf{0} & \mathbf{0} & \cdots & \mathbf{0}\\
										0 &  \mathbf{0}& \mathbf{I} & \mathbf{0} & \mathbf{0} & \cdots & \mathbf{0}\\
										 0 &  \mathbf{0} & \mathbf{0}&\mathbf{I} & \mathbf{0} & \cdots & \mathbf{0}\\
										 0 &  \mathbf{0}& \mathbf{0} &\mathbf{0} & \ddots & \ddots & \mathbf{0}  \end{array}\right]
\end{equation}
where $\mathbf{I}$ in $\mathbf{M}_{1}$ and all but the first $\mathbf{I}$ in $\mathbf{M}_{\varphi}$ denotes a $d_u$-dimensional identity matrix, and the first $\mathbf{I}$ in $\mathbf{M}_{\varphi}$ is a $d_x$-dimensional identity matrix. And let,
\begin{equation}
	 \mathbf{b}_{\tau_1} \,=\,(1,(0,\ldots,0),(b,\ldots,b),0,\ldots,0),\qquad
	 \mathbf{b}_{\tau_{1.5}} \,=\, (0,-b,\ldots,-b)
\end{equation}
Initialise $h_{1,0}=(0, \widehat{x}_0+(b,\ldots,b), (0,\ldots,0), b, \ldots, b)$. With this construction and with, lets say, $b>0$ large enough, we have the desired state $h_{1.5,t}$. 

There is no additional feedback in the RNN (\ref{RNNformfixedhorizon}) constructed in this proof. This state $h_{1.5,t}\in\mathbb{H}^{d_{h_{1.5}}}$ may be viewed as an input then for the following feedforward neural network,
\begin{align}
	\widehat{X}_t \,=\, \tau_L\cdot\sigma\cdot\tau_{l-1}\cdots\sigma\cdot\tau_2(h_{1.5,t})  \label{FFNNformfixedhorizon}
\end{align}

We consider a type of unfolded conditional functional,
\begin{equation}
	\widetilde{F}(t,\widehat{x}_0,U_1,\ldots,U_T) \,=\, \left\{\begin{array}{lll} F_1(\widehat{x}_0) &&\mathrm{if}~t=1 \\ F_2\cdot F_1(\widehat{x}_0) &&\mathrm{if}~t=2 \\ ~~~~\vdots &&~~~~ \vdots \\ F_{T}\cdots F_2\cdot F_1(\widehat{x}_0) &&\mathrm{if}~t=T \end{array}\right.
\end{equation}
Note that this function is (Borel) measurable as the composition of measurable functions. We can now apply classical universal approximation theorems on $\widetilde{F}$. For example, see Lemma \ref{lemmaUALp} in Appendix \ref{UAFFappendix}, or see \cite[proof of Theorem 4.16]{KidgerLyons2020a} for an easy to follow construction directly applicable in this setting. 
\end{proof}

\section{Proof of Corollary \ref{corollarylinear}}\label{corollarylinearproof}

Stacking (\ref{recursiveAffineRandSysInputDef}) and \eqref{Uasymptoticstationary} we may write, 
\begin{equation}
	\left[\begin{array}{c} X_t \\ U_t  \end{array}\right] \,=\, \left[\begin{array}{cc} \mathbf{A} & \mathbf{B}\mathbf{G} \\ \mathbf{0} & \mathbf{G}  \end{array}\right] \left[\begin{array}{c} X_{t-1} \\ U_{t-1}  \end{array}\right]  \,+\, \left[\begin{array}{cc} \mathbf{B} & \mathbf{0} \\ \mathbf{0} & \mathbf{Id}  \end{array}\right]  U'_t
\end{equation}
It is assumed the spectral radius of both $\mathbf{F}$ and $\mathbf{G}$ is strictly less than one. Hence, the transition matrix for the given stacked system has spectral radius less than one (its eigenvalues are the union of the eigenvalues of the diagonal blocks). The process $\{U'_t(\omega)=U'(\theta^t(\omega)\,|\,t\in\mathbb{N}\}$ is assumed stationary. The hypotheses of Lemma \ref{eltonlemma} are satisfied. Since $X_0$ and $U_0$ are assumed independent of $U'_0$ it follows that, with the obvious change of notation, the hypotheses of Theorem \ref{theoremuniversaltrajapprox} are satisfied for the stacked process, with $\{U'_t\,|\,t\in\mathbb{N}\}$ as the input in this case. Consider some RNN approximation, of the stacked system, in the form (\ref{RNNspecialform}) with $U'_t$ as input and $(\widehat{X}_t,\widehat{U}_t)$ as feedback. Suppose now that we just substitute $U_t$ for the input $U'_t$ in this network, and set to zero the affine feedback map on $\widehat{U}_t$. The result is a sub-RNN that approximates the desired map $F_{t}(x) = \mathbf{A}\,x + \mathbf{B}\,U_t$. By forcing an accurate enough RNN approximation of the stacked system, via Theorem \ref{theoremuniversaltrajapprox}, the conclusions of Corollary \ref{corollarylinear} hold for the sub-RNN approximation of the desired $F_{t}$ when changing inputs from $U'_t$ to $U_t$. In particular, with the norm $\|\cdot\|_p:=\mathsf{E}[\|\cdot\|^p]^{1/p}$ we have,
\begin{equation}
	\|U_t-U'_t  \|_p \,\leq\, \|\mathbf{G}^t\|\|U_{0}\|_p + \sum_{i=1}^{t-1} \|\mathbf{G}^{i}\| \|U_{i}' \|_p \,\leq\, \|\mathbf{G}^t\|\|U_{0}\|_p + \frac{\|U' \|_p}{1 - \|\mathbf{G}\|} 
\end{equation}
which is bounded by some finite constant; and $\|\mathbf{G}^t\|$ actually goes to zero exponentially fast.  \qed

\end{document}